\documentclass{article}

\usepackage[numbers]{natbib}

\usepackage[final]{neurips_2024}

\usepackage[utf8]{inputenc} %
\usepackage[T1]{fontenc}    %
\usepackage{hyperref}       %
\usepackage{url}            %
\usepackage{booktabs}       %
\usepackage{amsfonts}       %
\usepackage{nicefrac}       %
\usepackage{microtype}      %
\usepackage{xcolor}         %

\usepackage{multirow}
\usepackage{multicol}

\usepackage{microtype}
\usepackage{graphicx}
\usepackage{subfigure}
\usepackage{booktabs} %

\usepackage{amsmath} %
\usepackage{amssymb} %
\usepackage{mathtools}
\usepackage{bm} %

\newcommand*{\thead}[1]{%
\multicolumn{1}{c}{\begin{tabular}{@{}c@{}}#1\end{tabular}}}
\usepackage{pifont}%
\newcommand{\xmark}{\ding{55}}%

\usepackage{hyperref}

\usepackage{amsmath}
\usepackage{amssymb}
\usepackage{mathtools}
\usepackage{amsthm}
\usepackage{tabularx}

\usepackage[capitalize,noabbrev]{cleveref}

\theoremstyle{plain}

\theoremstyle{definition}

\theoremstyle{remark}

\usepackage{thmtools, thm-restate}

\usepackage[textsize=tiny]{todonotes}

\usepackage{algorithm}
\usepackage{algpseudocode}

\DeclareMathOperator*{\argmax}{arg\,max}
\DeclareMathOperator*{\argmin}{arg\,min}

\title{Probabilistic Decomposed Linear Dynamical Systems for Robust Discovery of Latent Neural Dynamics}

\author{%
  Yenho Chen$^{1,3}$, Noga Mudrik$^4$, Kyle A. Johnsen$^{3}$, \\ 
  {\bf Sankaraleengam Alagapan$^2$, Adam S. Charles$^4$, and Christopher J. Rozell$^{1,2}$} \\  
  $^1$Machine Learning Center, Georgia Institute of Technology\\
  $^2$School of Electrical and Computer Engineering, Georgia Institute of Technology\\
  $^3$Coulter Dept. of Biomedical Engineering, Emory University and Georgia Institute of Technology\\
  $^4$Department of Biomedical Engineering, Mathematical Institute for Data Science, \\
 Center for Imaging Science, Kavli Neuroscience Discovery Institute, Johns Hopkins University\\
  \texttt{yenho@gatech.edu, nmudrik1@jhu.edu, kjohnsen@gatech.edu}\\
  \texttt{sankar.alagapan@gatech.edu, adamsc@jhu.edu, crozell@gatech.edu}
}

\begin{document}

\maketitle

\begin{abstract}
    Time-varying linear state-space models are powerful tools for obtaining mathematically interpretable representations of neural signals. For example, switching and decomposed models describe complex systems using latent variables that evolve according to simple locally linear dynamics. However, existing methods for latent variable estimation are not robust to dynamical noise and system nonlinearity due to noise-sensitive inference procedures and limited model formulations. This can lead to inconsistent results on signals with similar dynamics, limiting the model's ability to provide scientific insight. In this work, we address these limitations and propose a probabilistic approach to latent variable estimation in decomposed models that improves robustness against dynamical noise. Additionally, we introduce an extended latent dynamics model to improve robustness against system nonlinearities.
    We evaluate our approach on several synthetic dynamical systems, including an empirically-derived brain-computer interface experiment, and demonstrate more accurate latent variable inference in nonlinear systems with diverse noise conditions.
    Furthermore, we apply our method to a real-world clinical neurophysiology dataset, illustrating the ability to identify interpretable and coherent structure where previous models cannot. \footnote{ Code is available at: \\ \href{https://github.com/siplab-gt/probabilistic-decomposed-linear-dynamical-systems}{\tt https://github.com/siplab-gt/probabilistic-decomposed-linear-dynamical-systems}}

\end{abstract}

\section{Introduction}

A central goal in computational neuroscience is to develop models capable of discovering latent structure within noisy, high-dimensional neural signals. By identifying hidden relationships within neural recordings, we can begin to understand, predict, and control the behaviors of the underlying systems. Modeling neural time-series is challenging due to the range of temporal dynamics present. For example, there may be gradual short-term fluctuations, abrupt shifts in response to external stimuli, and long-term global drifts resulting from changes in baseline activity levels~\citep{ribeiro2022slow, zalesky2014time, shine2016dynamics, fox2007spontaneous, deco2017dynamics}.  

Although black-box approaches based on deep learning are available~\cite{pandarinath2018inferring, kim2021plnde, schneider2023learnable}, their complexity often obscures the relationships learned from the data, making it difficult to extract scientific insights from these models. As a result, practitioners may favor time-varying linear state-space models which offer mathematically interpretable representations by approximating complex dynamics with simple locally linear regimes \cite{linderman2016recurrent}. However, obtaining latent variable estimates that are robust to dynamical noise and system nonlinearity in these state-space models is challenging. When applied to neural time-series, latent variable estimation may become unstable due to inflexible model formulations or noise-sensitive inference procedures. This can incorrectly produce disparate results for signals generated from the same underlying system.

For example, switching linear dynamical systems (SLDS) and related models~\citep{wu2004modeling, zoltowski2020unifying} segment time-series into discrete linear dynamical states, providing a piecewise linear approximation of the underlying system while highlighting coherent groups of activity. However, the assumption of discrete components can be a poor modeling choice for neural signals that contain continuous-valued fluctuations, such as gradual or random changes to the system speed as seen during neural ramping activity~\citep{narayanan2016ramping} or L\'evy walk dynamics in the cerebral cortex~\citep{liu2021levy}. 
As demonstrated in \cite{mudrik2022decomposed} and our experiments in Section \ref{sec:clinical_experiment}, when applied to real-world datasets, inference of the switching variables can result in rapid, random oscillations between the discrete modes, indicating that the model is unable to identify meaningful structure in the data.

To address the limitations of discrete states, the decomposed linear dynamical systems (dLDS) model~\citep{mudrik2022decomposed} learns a dictionary set of linear dynamical regimes, referred to as dynamic operators (DOs), that can be modified and combined through a linear combination of sparse coefficients. By allowing coefficients to be time-varying and continuous-valued, dLDS naturally captures both gradual changes by adjusting the coefficient magnitudes and abrupt shifts by changing the set of active DOs over time. 
Inference is accomplished by optimizing over a cost function that encourages data reconstruction while also constraining the structure of the dynamic coefficients to be sparse and temporally smooth.

Unfortunately, there are two critical shortcomings that prevent the robust inference of the latent variables in dLDS. First, the cost-based inference procedure is sensitive to noise, because of the regularization term encouraging temporal smoothness. This term sequentially propagates errors from noisy coefficient estimates over the length of the time series. Consequently, the model may produce inconsistent coefficient estimates on similar signals and have poor multi-step inference performance, indicating that the learned dynamics are unable to generalize well beyond a single time step. Second, the original latent dynamics model lacks a method for accurately representing systems with multiple fixed points, causing DO coefficients to oscillate or switch arbitrarily in a way that may not align with the underlying process. To learn an effective decomposed model in practice, we require a strategy that provides robust estimates of latent variables despite the presence of noise and system nonlinearity.

In this work, we address these limitations by introducing the probabilistic decomposed linear dynamical systems (p-dLDS) model. Our approach  improves robustness of latent variable estimation while maintaining the richness of a decomposed dynamical systems model.
First, we propose a probabilistic inference procedure that reduces the model's sensitivity to temporal noise by accounting for uncertainty in the latent variable estimates over time. Namely, we introduce time-informed hierarchical variables that encourage both sparse and smooth model coefficients. We devise a variational expectation maximization (vEM) procedure to perform inference and learning over this probabilistic structure. Second, we incorporate a time-varying offset term to model systems that orbit multiple fixed points. While we analytically identify model degeneracies with this formulation, we propose an additive decomposition strategy that prevents convergence to trivial solutions. 

Through several synthetic examples, we demonstrate how these contributions lead to improved accuracy and robustness of latent variable estimation despite difficult noise conditions. We extend these results to an empirically-derived brain-computer interface experiment~\cite{depasquale2023centrality}, showcasing robustness to highly nonlinear observation functions and the ability to extract meaningful insights from the learned latent variables. Finally, we illustrate how our method effectively identifies interpretable and coherent structure in a clinical neurophysiology dataset where previous models are unsuccessful.

\section{Background and Related Work}
\label{sec:related_work}
\noindent {\bf State Space Models.} Our goal is to accurately describe the evolution of high-dimensional time-series data $\bm{y}_t \in \mathbb{R}^M$ with the following state-space equations,
\begin{alignat}{4}
\label{eq:ssm_obs}
    \bm{y}_{t} &= \bm{D} \bm{x}_t + \bm{d} + \bm{\epsilon}_{y_t} , &&\quad \bm{\epsilon}_{y_t} \sim \mathcal{N}(0, \bm{\Sigma}_y),  &&&\quad {\rm (Observations)} \\
\nonumber
    \bm{x}_t &= f_t(\bm{x}_{t-1}) + \bm{\epsilon}_{x_t}, &&\quad \bm{\epsilon}_{x_t}  \sim \mathcal{N}(0, \bm{\Sigma}_x), &&&\quad {\rm (Dynamics)}
\end{alignat}
where $\bm{x}_t \in \mathbb{R}^N$ is the latent state, $f_t(\cdot)$ is the dynamics function, and $\bm{D}\in\mathbb{R}^{M\times N}$ and $\bm{d} \in \mathbb{R}^M$ describe a linear observation function. Our work focuses on the case when $N<M$, which compresses high-dimensional signals into a low-dimensional latent space. By choosing $f_t$ to be a time-varying linear operator, we can approximate complex nonlinear dynamics with simple locally-linear components, balancing expressivity with mathematical interpretability. However, learning a time-varying linear operator from data can be challenging, and typically requires additional constraints on the underlying generative model to identify meaningful representations.

\noindent {\bf Switching Linear Dynamical System (SLDS).} 
SLDS approximates nonlinear systems by introducing a discrete switching variable $z_t = \{1,\dots, K\}$ into the time-varying linear dynamics equation, 
\begin{alignat*}{3}
    \bm{x}_{t+1} &= \bm{x}_{t}+ \bm{F}_{z_t} \bm{x}_t + \bm{b}_{z_t} + \bm{\epsilon}_{x_t}.
\end{alignat*}
At each time step $t$, the latent state $\bm{x}_t$ evolves according to the $z_t$-th linear regime defined by $\bm{F}_{z_t}\in\mathbb{R}^{N\times N}$ and $\bm{b}_{z_t}\in\mathbb{R}^N$ while the switching variables evolve according to a Markov matrix. Inference is performed through a vEM algorithm, where the approximate posterior of the latent variables is estimated through coordinate ascent updates over tractable subgraphs. There are many extensions of SLDS, such as rSLDS~\citep{linderman2016recurrent} which modifies its generative behavior by informing the transitions of $z_t$ with $x_{t-1}$. However, switching models are inherently limited when describing complex signals due to their discrete formulation. For instance, a switched representation is unable to learn that a dynamic regime may exhibit a range of variations. In neural systems, these variations may arise from random spiking processes \cite{rule2015contribution, wang2022modeling} or systems with randomly distributed speeds \cite{waschke2021behavior, goodman2024brain}. In SLDS, each variation is learned as a separate discrete state, thus obscuring that the learned states are related. Furthermore, the switching formulation cannot adapt the learned system to unseen variations (i.e. different levels of random speeds). This can produce unstable inference behavior, where the switching state oscillates unpredictably or collapses to a single uninformative state.

\noindent {\bf Decomposed Linear Dynamical Systems (dLDS).} 
dLDS~\citep{mudrik2022decomposed} relaxes the discrete formulation by approximating nonlinear and nonstationary signals with a time-varying mixture of linear dynamical systems (LDS) defined by the following equations and constraints,
\begin{align}
    \label{eq:dlds_dynamics}
    \bm{x}_{t+1} = \bm{x}_t + \bm{F}_{t} \bm{x}_t + \bm{\epsilon}_{x_t},    \qquad  \bm{F}_t = \textstyle \sum_{k=1}^K \bm{f}_k c_{t,k}, \qquad \textrm{ s.t. {$\bm{c}_t$} is sparse}.    
\end{align}

Every transition $\bm{F}_t$ is decomposed as a linear combination of sparse coefficients 
$\bm{c}_t\in\mathbb{R}^K$ 
and a dictionary of $K$ DOs $\bm{f}_k\in \mathbb{R}^{N\times N}$.
Figure~\ref{fig:intro}A shows the corresponding graphical model.  
Inference of the latent variables is accomplished by solving the Basis Pursuit Denoising with Dynamic Filtering (BPDN-DF)~\citep{charles2016dynamic} objective sequentially for all $t$ and $\lambda_0,\lambda_1,\lambda_2 > 0$, 
\begin{align*}
    \begin{split}
        \label{eq:bpdn-df}
        \bm{\widehat{x}}_t, \bm{\widehat{c}}_t = \argmin_{\bm{x}_t, \bm{c}_t} &\|\bm{y}_t - \bm{D} \bm{x}_t \|_2^2  
        + \lambda_0 \|\bm{x}_t - \bm{\widehat{x}}_{t-1} - \bm{F}_t \bm{\widehat{x}}_{t-1} \|_2^2
        + \lambda_1 \|\bm{c}_t \|_1
        + \lambda_2 \|\bm{c}_t - \bm{\widehat{c}}_{t-1} \|_2^2.
    \end{split}
\end{align*}
This produces a point estimate of $\bm{x}_t$ and $\bm{c}_t$ that matches the likelihood function resulting from Equations \eqref{eq:ssm_obs} and \eqref{eq:dlds_dynamics}. In this objective, the dynamic coefficients are encouraged to be sparse through the $\ell_1$ penalty and temporally smooth through the $\ell_2$ penalty centered around the previous coefficient estimate. However, this approach is sensitive to noise because inference relies on propagating noisy point estimates of $\widehat{\bm{c}}_{t-1}$ over time. As a result, BPDN-DF may accumulate errors that can lead to significantly different coefficient estimates on signals sampled from the same generative process. Furthermore, the lack of robustness to noise can degrade multi-step inference performance, causing the inferred system to quickly diverge from the true system. This suggests that the inferred latent variables only capture the local activity narrowly and are unable to accurately represent the dynamics beyond a single time-step. 
Another drawback of dLDS arises from the dynamics model in equation~\eqref{eq:dlds_dynamics} which implicitly assumes that the observed dynamics contain a single fixed point that revolves around the origin. 
This limits dLDS's ability to model systems that cannot be easily mean-centered such as those with multiple fixed points or nonstationary drifts.

\noindent {\bf Sparse Bayesian Learning with Dynamic Filtering.} Sparsity is achieved in probabilistic models through hierarchical scale-mixture priors~\citep{figueiredo2003adaptive, bae2004gene, carvalho2010horseshoe}.
To integrate dynamical information into probabilistic sparse signal inference, previous work~\cite{o2019sparse} proposes the 
Sparse Bayesian Learning with Dynamic Filtering (SBL-DF) framework where the following hierarchical model is defined, 
\begin{equation}
    \label{eq:sbl-df}
    p({\bf x}_t, {\bf c}_t, {\bf \gamma}_t) = p({\bf x}_t|{\bf c}_t)  \prod_{k=1}^K p(c_{t,k}| \gamma_{t,k}) p(\gamma_{t,k} | a_{t,k}, b_{t,k}).
\end{equation}
Here, $p({\bf x}_t|{\bf c}_t) = \mathcal{N}({\bf \Phi  c}_t, \lambda_t {\bf I})$ specifies the likelihood, where $\bm{\Phi}$ is a measurement matrix. Sparsity is encouraged through the zero-mean Gaussian priors $p(c_{t,k}| \gamma_{t,k}) = \mathcal{N}(0, \gamma_{t,k})$ independently placed on each element of the sparse vector. The variance parameters $\gamma_{t,k}$ are defined by an inverse gamma hyperprior $p(\gamma_{t,k} | \alpha_{t,k}, \beta_{t,k}) = \mathcal{IG}_{\gamma_{t,k}}(\alpha_{t,k}, \beta_{t,k})$ with shape parameters $\alpha_{t,k}$ and $\beta_{t,k}$. 
When marginalizing over $\gamma_{t,k}$'s, we see that $p(c_{t,k} | \alpha_{t,k}, \beta_{t,k})$ becomes a t-distribution known for its high kurtosis, which is essential for producing sparse solutions. 
To propogate dynamics information, the past estimate $\widehat{\bm{c}}_{t-1}$ informs the hyperprior parameters of the next estimate such that $b_{t,k}/a_{t,k} = c_{t-1,k}^2$.

\begin{figure*}[!t]
    \centering
    \includegraphics[width=\textwidth]{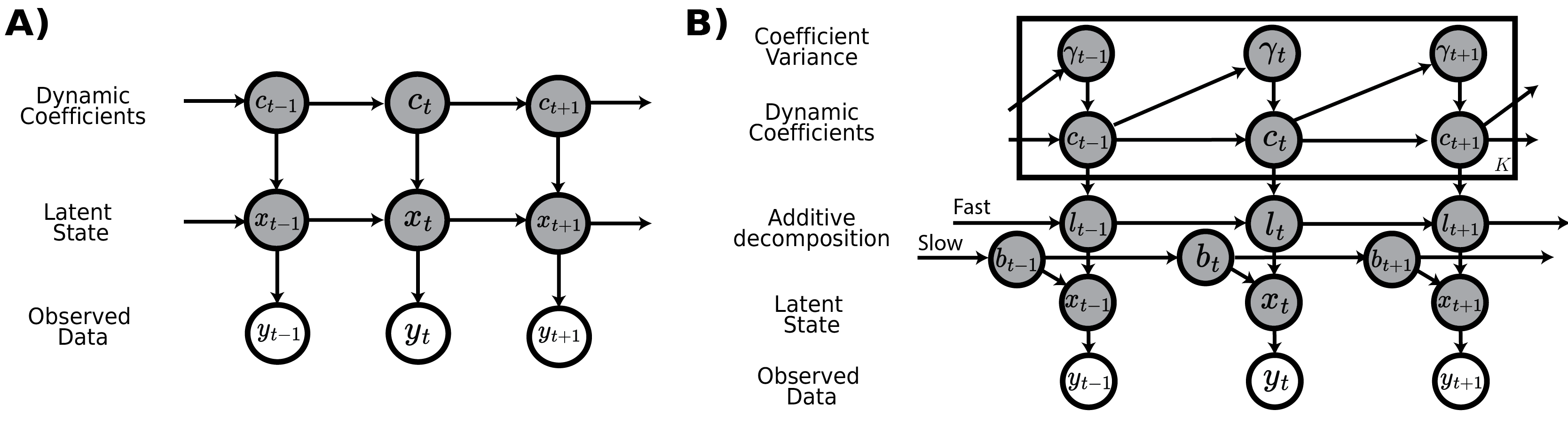}
    \caption{
     {\bf (A)} Graphical model of dLDS.
    {\bf (B)} p-dLDS includes hierarchical variables for probabilistic sparse inference and reparameterizes the latent space to include a time-varying offset term.
    }
    \label{fig:intro}
\end{figure*}

\section{Probabilistic Decomposed Linear Dynamical Systems}

We build upon dLDS and propose a probabilistic decomposed linear dynamical systems (p-dLDS) model.
Rather than propagating noisy point estimates of the latent variables during inference, we improve robustness by marginalizing over uncertainty with respect to time. Additionally, we propose a tractable method for extending the dynamics model to systems with multiple fixed points.

\subsection{Time-varying offset term}

Note that for any parameter setting, the dLDS local dynamics (eq.~\eqref{eq:dlds_dynamics}) reduces to a linear dynamical system (LDS) that is characterized by a single fixed point centered around the origin. Yet, real-world dynamical systems often consist of much more complicated behaviors. Nonlinearities can cause a signal to navigate through multiple fixed points throughout its trajectory, while nonstationarities may change the behavior of the system entirely with new fixed points emerging or disappearing.
Simple preprocessing measures, such as mean-centering the data, are inadequate to account for these behaviors. To enable robustness against these behaviors, we introduce a time-varying offset term $\bm{b}_t \in \mathbb{R}^N$ into Equation~\eqref{eq:dlds_dynamics} as a flexible way to account for dynamics not readily captured by the original dLDS latent dynamics model.

\begin{restatable}{lemma}{offsetprop}
\label{prop:offset_solution}
    Let the transition between any two state vectors $\bm{x}_t, \bm{x}_{t+1} \in \mathbb{R}^N$ be defined by the linear dynamics matrix $\bm{F}_t \in \mathbb{R}^{N\times N}$ and the dynamics offset $\bm{b}_t \in \mathbb{R}^N$. For any $\lambda > 0$, the objective,
    \[ \argmin_{\bm{F}_t, \bm{b}_t} \| \bm{x}_{t+1} - \bm{x}_{t} - \bm{F}_t \bm{x}_t - \bm{b}_t \|_2^2 + \lambda \|\bm{F}_t \|_2^2, \]
    is minimized when $\bm{F}_t = \bm{0}$ and $\bm{b}_t = \bm{x}_{t+1} - \bm{x}_{t}$.
\end{restatable}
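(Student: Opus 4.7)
The plan is to exploit the asymmetry in the objective: the offset $\bm{b}_t$ is unpenalized while $\bm{F}_t$ carries a strictly positive Frobenius penalty, so the reconstruction residual can always be absorbed into $\bm{b}_t$ at no cost. Concretely, I would establish a lower bound of zero on the entire objective (since it is the sum of two non-negative terms) and then exhibit the proposed point $(\bm{F}_t, \bm{b}_t) = (\bm{0}, \bm{x}_{t+1} - \bm{x}_t)$ as attaining that lower bound.

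First I would note that both $\|\bm{x}_{t+1} - \bm{x}_t - \bm{F}_t \bm{x}_t - \bm{b}_t\|_2^2$ and $\lambda \|\bm{F}_t\|_2^2$ are non-negative for every choice of $\bm{F}_t$ and $\bm{b}_t$, so the objective value is bounded below by $0$. Second, I would plug in $\bm{F}_t = \bm{0}$, which kills the penalty term, and $\bm{b}_t = \bm{x}_{t+1} - \bm{x}_t$, which makes the residual identically zero; thus the objective equals $0$ at this point. Since the value attained equals the global lower bound, this choice is a minimizer.

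For completeness I would also argue uniqueness: any minimizer must achieve $\lambda \|\bm{F}_t\|_2^2 = 0$, which forces $\bm{F}_t = \bm{0}$ because $\lambda > 0$, and it must also achieve zero residual, which then forces $\bm{b}_t = \bm{x}_{t+1} - \bm{x}_t$. This makes the stated minimizer the unique global optimum (as a function of $(\bm{F}_t, \bm{b}_t)$), not merely a stationary point of an alternating scheme.

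There is essentially no obstacle here; the only subtlety worth flagging is conceptual rather than technical. The lemma's real purpose is to warn that, in isolation, the augmented dynamics model is degenerate: a naive joint optimization over $\bm{F}_t$ and $\bm{b}_t$ collapses to a trivial solution in which the offset absorbs the whole trajectory and all structure is lost. I would therefore keep the proof itself extremely short (two lower-bound lines plus the substitution) so that the reader's attention is directed to this modeling consequence, which motivates the additive decomposition strategy introduced later in the paper.
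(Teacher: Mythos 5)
Your proof is correct, and it takes a genuinely different route from the paper's. The paper casts the problem as ridge regression with an unpenalized intercept: it sets $\bm{r}_t = \bm{x}_{t+1} - \bm{x}_t$, invokes the standard recipe of centering the data and applying the closed-form Tikhonov solution, and then observes that with only a single ``data point'' the centered input $\tilde{\bm{x}}_t$ vanishes, forcing $\widehat{\bm{F}}_t = \bm{0}$ and leaving $\widehat{\bm{b}}_t$ to absorb the residual. Your argument instead bounds the objective below by zero (as a sum of two non-negative terms) and exhibits $(\bm{0}, \bm{x}_{t+1}-\bm{x}_t)$ as attaining that bound, then adds a uniqueness step: $\lambda > 0$ forces $\bm{F}_t = \bm{0}$ at any minimizer, which in turn pins down $\bm{b}_t$. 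Your version is more elementary and, frankly, more airtight as a standalone proof of the stated claim --- the paper's appeal to the general ridge-regression formula is somewhat informal for the degenerate one-sample case. What the paper's framing buys is the connection to the statistical picture: it makes immediately visible why the degeneracy is specifically a consequence of having a single observed transition per time step, and why collecting multiple aligned transitions would yield a nontrivial $\widehat{\bm{F}}_t$ --- a point the paper elaborates in the remark following the proof and which motivates why that fix is impractical for naturalistic time series. Your closing observation about the modeling consequence (the offset absorbing all structure) captures the same message, so nothing essential is lost; you have simply traded the statistical interpretation for a shorter and more rigorous derivation.
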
 
This result, proven in Appendix \ref{proof:offset_solution}, reveals that introducing a time-varying offset term makes inference of the dynamics a degenerate problem. 
While the solution in Lemma \ref{prop:offset_solution} minimizes the objective, it fails to capture any meaningful structure in $\bm{F}_t$ as the result of $\bm{b}_t$ being unconstrained. To prevent the convergence to these trivial solutions, we decompose the latent state space as, 
\begin{align}
\label{eq:add_decomp}
    \bm{x}_t &= \bm{l}_t + \bm{b}_t,
\end{align}
where $\bm{l}_t$ captures fast dynamics and $\bm{b}_t$ captures slow-varying trend behavior. These latent variables follow the dynamic equations $\bm{l}_{t+1} = \bm{l}_t + \bm{F}_t \bm{l}_{t} + \bm{\epsilon}_{l_{t+1}}$ and $\bm{b}_{t+1} = \bm{b}_{t} + \bm{\epsilon}_{b_{t+1}}$ where $\bm{\epsilon}_{l},\bm{\epsilon}_{b} \in \mathbb{R}^N$ represent noise sampled from $\bm{\epsilon}_b \sim \mathcal{N}(0,\bm{\Sigma}_b)$ and $\bm{\epsilon}_l \sim \mathcal{N}(0,\bm{\Sigma}_l)$ respectively.

\subsection{Probabilistic Time-Informed Sparsity}

In decomposed models, we aim to achieve two goals simultaneously: sparsity and smoothness of coefficients over time. 
Motivated by this, we incorporate dynamics-informed probabilistic structure.
First, we assume that each coefficient evolves independently of the others. Second, we introduce a hierarchical variance parameter $\gamma_{t, k}$ that controls the sparsity for each $c_{t,k}$. 
Moreover, we introduce dynamics information during sparse inference by encouraging a similar active support set in consecutive time slices through the variance hyperpriors. Put together, the resulting coefficient transition density in p-dLDS becomes, 
\begin{equation}
\label{eq:tdlds_coefs}
    p(\bm{c}_{t}, \bm{\gamma}_t | \bm{c}_{t-1} ) \coloneqq p(\bm{c}_{t} | \bm{c}_{t-1}, \bm{\gamma}_t ) p(\bm{\gamma}_t | \bm{c}_{t-1}) = \prod_{k=1}^K p(c_{t,k} | c_{t-1,k} \gamma_{t,k}) p( \gamma_{t,k} | c_{t-1,k}).
\end{equation}
We define the first term on the right-hand side with the following functional form,
\begin{equation}
    \label{eq:smoothsparseConstraints}
    p(c_{t,k} | c_{t-1,k}, \gamma_t) \propto \exp \left(- \frac{c_{t,k}^2}{2\gamma_{t,k}} - \frac{(c_{t,k} - c_{t-1,k})^2}{2\sigma^2_{t-1,k}} \right) \propto \mathcal{N}(c_{t-1,k},\sigma_{t-1,k}^2) \mathcal{N}(0, \gamma_{t,k}).
\end{equation}
This density captures the constraints of sparsity and smoothness for the inferred coefficients $c_{t,k}$. When the variance around zero $\gamma_{t,k}$ is small, this structure promotes sparsity by shrinking coefficient values towards zero. Conversely, when the variance around the previous time step $\sigma^2_{t-1,k}$ is small, it encourages smooothness by shrinking coefficients towards the previous value. While the idea of combining two shrinkage effects in a single density has been explored in previous works \cite{irie2019bayesian, casella2010penalized, li2010bayesian, kakikawa2023bayesian}, those approaches generally require manual balancing of the two penalties. In contrast, we devise a procedure in the following section that estimates these variance parameters automatically during inference and learning.

The second density on the right-hand side from equation~\eqref{eq:tdlds_coefs} is defined similarly to the hyperprior in SBL-DF.
(i.e., $p(\gamma_{t,k} | c_{t-1,k}) = \mathcal{IG}(\xi, \xi c_{t-1,k}^2)$ where $\xi$ weighs the influence of the dynamics when estimating $\gamma_{t,k}$). The resulting graphical model is shown in Figure ~\ref{fig:intro}B.
 We note that since the value of the previous coefficient is squared, the overall prior placed on the inverse gamma density follows a $\chi^2$ distribution.

\subsection{Inference and Learning}
The joint distribution of p-dLDS is given by,
\begin{equation}
\begin{split}
    \label{eq:pdlds_joint}
        p(\bm{x},\bm{y},\bm{c},\bm{\gamma} | \theta) = p(\bm{x}_1) &\left[\prod_{t=1}^{T} p(\bm{y}_t | \bm{x}_t) \right] \\ &\left[ \prod_{t=1}^{T-1} p(\bm{x}_{t+1} | \bm{x}_t, \bm{c}_t)  \left[  \prod_{k=1}^K p(c_{t+1,k} | c_{t,k}, \gamma_{t+1,k}) p(\gamma_{t+1,k} | c_{t,k}) \right]\right],
\end{split}
\end{equation}
where we denote $\bm{x} =\bm{x}_{1:T}$ for brevity.
Exact posterior inference is intractable due to the nonconjugacy introduced by incorporating time-informed sparsity-inducing structure into the graphical model. As a result, we devise a variational expectation maximization (vEM) procedure where the approximate posterior is factorized as
\begin{align*}
    \begin{split}
    p(\bm{x}, \bm{c}, \bm{\gamma} | \bm{y},\theta)  \approx 
    q(\bm{x}) 
    q(\bm{c}, \bm{\gamma}).
    \end{split}
\end{align*}
Here, the parameters are given by $\theta \in \{ \bm{f}_{1:K}, \bm{D}, \bm{d}, \bm{\Sigma}_y, \bm{\Sigma}_x, \bm{\Sigma}_c\}$. 
Our approach contrasts with BPDN-DF, which estimates latent variables through separate $\ell_1$ problems at each point in time. Instead, we preserve the time-dependence structure within each class of latent variables and leverage efficient inference algorithms that marginalize over uncertainty with respect to time. 
In general, we seek to maximize the variational lower bound, 
\begin{align*}
    \label{eq:CAVI_rule}
    \mathcal{L}_q(\theta) &= \mathbb{E}_{q(\bm{x})q(\bm{c},\bm{\gamma})} [ \log p(\bm{y}, \bm{x}, \bm{c}, \bm{\gamma} | \theta) -  \log q(\bm{x}) q(\bm{c}, \bm{\gamma})],
\end{align*}
with coordinate ascent updates on the latent state posterior, the dynamics coefficients posterior, and the model parameters.

{\bf Updating Latent State Posterior.} The optimal coordinate ascent variational update is given by, 
\begin{align}
    q(\bm{x}) &\propto \exp \left( \mathbb{E}_{q(\bm{c},\bm{\gamma})}\left[ \log p(\bm{x},\bm{y},\bm{c},\bm{\gamma} | \theta)\right] \right).
\end{align}
Our assumed decomposition in equation \eqref{eq:add_decomp} allows us to define the latent state transition density as $p(\bm{x}_{t+1} | \bm{x}_t, \bm{c}_t)=p(\bm{x}_{t+1} = \bm{l}_{t+1} +\bm{b}_{t+1})=\mathcal{N}(\bm{x}_{t+1} ; \bm{l}_{t}+\bm{F}_t \bm{l}_t + \bm{b}_t)$. Substituting this into equations \eqref{eq:pdlds_joint} and \eqref{eq:CAVI_rule}, we get that the optimal coordinate ascent approximate posterior becomes, 
\begin{align}
    \label{eq:qx_density}
    q(\bm{x}) 
    =\mathcal{N}(\bm{x}_1; \bm{\mu}_1,\Sigma_1) \left[ \prod_{t=2}^T  \mathcal{N}(\bm{y}_t ; \bm{D} \bm{x}_{t} + \bm{d}, \bm{\Sigma}_y)  \right] \left[ \prod_{t=1}^T \mathcal{N}(\bm{x}_t;  \bm{l}_{t-1} + \bm{F}_{t-1} \bm{l}_{t-1} +\bm{b}_{t}, \bm{\Sigma}_x) \right],
\end{align}
where, $\bm{\mu}_1$ and $\bm{\Sigma}_1$ are the mean and covariance of the initial state.

\begin{restatable}{lemma}{offsetreparam}    
\label{prop:offset_reparam}
Let $\bm{l},\bm{b} \in \mathbb{R}^N$ be independent random variables such that $ \bm{l}\sim p(\bm{l})$ and $\bm{b}\sim p(\bm{b})$. Their sum $\bm{x} = \bm{l} + \bm{b}$ is distributed according to $\mathcal{N}(\bm{\mu}_l +\bm{\mu}_b, \bm{\Sigma}_l + \bm{\Sigma}_b)$ when 1) $p(\bm{b}) = \mathcal{N}(\bm{\mu}_b, \bm{\Sigma}_b)$ and $p(\bm{l}) = \mathcal{N}(\bm{\mu}_l, \bm{\Sigma}_l)$ and when 2) $p(\bm{b}) = \delta(\bm{b}-\bm{\mu}_b)$ and $p(\bm{l}) = \mathcal{N}(\bm{\mu}_l, \bm{\Sigma}_l + \bm{\Sigma}_b)$.
\end{restatable}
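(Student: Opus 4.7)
The plan is to handle the two cases separately using standard properties of Gaussian random variables, since both reductions are classical.

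For case (1), I would invoke the fact that the sum of two independent multivariate Gaussians is again Gaussian. The cleanest route is via characteristic functions: under independence, the characteristic function of $\bm{x} = \bm{l} + \bm{b}$ factors as $\varphi_{\bm{x}}(\bm{t}) = \varphi_{\bm{l}}(\bm{t}) \varphi_{\bm{b}}(\bm{t})$. Substituting the Gaussian characteristic function $\varphi_{\bm{l}}(\bm{t}) = \exp(i\bm{t}^\top \bm{\mu}_l - \tfrac{1}{2} \bm{t}^\top \bm{\Sigma}_l \bm{t})$ (and similarly for $\bm{b}$), the product yields $\exp(i\bm{t}^\top (\bm{\mu}_l + \bm{\mu}_b) - \tfrac{1}{2} \bm{t}^\top (\bm{\Sigma}_l + \bm{\Sigma}_b) \bm{t})$, which is the characteristic function of $\mathcal{N}(\bm{\mu}_l + \bm{\mu}_b, \bm{\Sigma}_l + \bm{\Sigma}_b)$. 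By uniqueness of the characteristic function, this identifies the distribution of $\bm{x}$. (Alternatively, a direct convolution argument completes the square in the exponent and arrives at the same conclusion.)

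For case (2), I would use the shift property of Gaussians. Since $\bm{b}$ is deterministic with $\bm{b} = \bm{\mu}_b$ almost surely under the Dirac delta, the sum $\bm{x} = \bm{l} + \bm{\mu}_b$ is merely a translation of $\bm{l}$. Translating a Gaussian shifts its mean by $\bm{\mu}_b$ and leaves its covariance untouched. Given $\bm{l} \sim \mathcal{N}(\bm{\mu}_l, \bm{\Sigma}_l + \bm{\Sigma}_b)$, the resulting distribution is $\mathcal{N}(\bm{\mu}_l + \bm{\mu}_b, \bm{\Sigma}_l + \bm{\Sigma}_b)$, matching the target. This can be seen formally by the change of variables $\bm{x} \mapsto \bm{x} - \bm{\mu}_b$ in the density, or again via characteristic functions where $\varphi_{\bm{b}}(\bm{t}) = \exp(i\bm{t}^\top \bm{\mu}_b)$.

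This lemma is not really hard — both cases are textbook — so the main subtlety is simply checking that the two different parameterizations of $p(\bm{l})$ and $p(\bm{b})$ in cases (1) and (2) are constructed precisely so that the resulting distribution of $\bm{x}$ is identical. The point being made is operational: during variational inference for $q(\bm{x})$ in equation~\eqref{eq:qx_density}, one may either treat $\bm{l}_t$ and $\bm{b}_t$ as two correlated Gaussian latent chains whose sum equals $\bm{x}_t$, or absorb the covariance of $\bm{b}_t$ into $\bm{l}_t$ and treat $\bm{b}_t$ as a deterministic offset, without changing the marginal over $\bm{x}_t$. I would end the proof by making this equivalence explicit, which is what justifies the reparameterization strategy used in the subsequent inference procedure.
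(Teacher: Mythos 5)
Your proposal is correct and follows essentially the same route as the paper: the paper computes the convolution of the two densities directly (using the standard Gaussian-convolution identity for case 1 and the sifting property of the delta for case 2), while you phrase case 1 through characteristic functions and case 2 as a deterministic translation, which are just the Fourier-dual and change-of-variables versions of the same argument. Your closing remark about why the two parameterizations yield the same marginal over $\bm{x}$ matches the paper's intended use of the lemma for the reparameterized inference of $q(\bm{x})$.
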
 

We leverage Lemma \ref{prop:offset_reparam} to reparameterize the state space trajectories into a deterministic and a stochastic component. The deterministic component captures the slow-moving offset density $q(\bm{b}) = \prod_{t=1}^T \delta (\bm{b}_t - \widehat{\bm{b}}_t)$, where $\widehat{\bm{b}}_t$ is estimated using a moving average of window size $S$ which can be efficiently parallelized. 
The remaining dynamics are captured in the stochastic component. We define the family of variational distributions to be the class of linear Gaussian state space models, such that $q(\bm{l}) = \mathcal{N}(\bm{l}_1,\bm{\Sigma}_1) \prod_{t=2}^T \mathcal{N}(\bm{l}_{t-1} + \bm{F}_{t-1} \bm{l}_{t-1}, \bm{\Sigma}_x )$. 
Conditioned on estimates $\bm{\widehat{b}}$ and samples of $\bm{c}$, the optimal coordinate ascent variational update for $q(\bm{l})$ is efficiently computed using 
the Kalman Smoother~\citep{rauch1965maximum}. We provide a full derivation of the update rule and Lemma \ref{prop:offset_reparam} in Appendix \ref{proof:offset_reparam}.

{\bf Updating Dynamics Coefficient Posterior.}
Sparse probabilistic representations introduce non-Gaussian factors which prevent closed-form message passing inference. Specifically, nonconjugacy arises from the inverse gamma term $ p(\gamma_{t+1,k} | c_{t,k})$ since it is parameterized by $c_{t,k}^2 \sim \chi^2$. Moreover, the posterior distribution over the coefficients is highly multi-modal as a result of the implicit t-distribution in our hierarchical model. To update the coefficient posteriors, we propose a three-step procedure, where we factorize $q(\bm{c},\bm{\gamma}) = q(\bm{c}) q(\bm{\gamma})$.
First, we obtain an initial estimate of the variational distributions using SBL-DF. 
Second, we update $q(c_{t,k}) = \mathcal{N} (c^*_{t,k},\widehat{\gamma}_{t,k})$ using stochastic gradient descent (SGD) over
\begin{align}
    \label{eq:c_step2}
    \bm{c}^* = 
    \argmax_{\bm{c}}  \sum_{t=1}^{T-1} \log p(\bm{\widehat{l}}_{t+1} | \bm{\widehat{l}}_t, \bm{c}_t  ) + \log p(\bm{c}_{t+1} | \bm{c}_t, \bm{\widehat{\gamma}}_{t+1}) + \log p(\bm{\widehat{\gamma}}_{t+1} | \bm{c}_t), 
\end{align}
where we have estimated the expectation in the optimal coordinate ascent update rule using samples from $\bm{\widehat{\gamma}} \sim q(\bm{\gamma})$ and $\bm{\widehat{l}} \sim q(\bm{l})$. To retain coefficient sparsity, we only update coefficients within the active support set. In our work, this is defined as coefficients that have an initial estimate of $|c_{t,k}| > \eta$ where $\eta=10^{-4}$. Finally, we update $q(\bm{\gamma})$ based on closed form conjugacy rules.

\noindent {\bf Update Parameters. }
Given our updated posteriors of the latent variables, we proceed to update the model parameters based on the ELBO,
\begin{align}
    \label{eq:theta_update}
    \theta^*  &= \argmax_\theta \mathbb{E}_{q(\bm{x}) q(\bm{c}) q(\bm{\gamma})} \left[ p (\bm{y}, \bm{c}, \bm{x}, \bm{\gamma} | \theta) - \log q(\bm{x})q(\bm{c})q(\bm{\gamma}) \right] \approx \argmax_\theta p (\bm{y}, \bm{\widehat{c}}, \bm{\widehat{x}}, \bm{\widehat{\gamma}} | \theta),
\end{align}
where we estimate the expectation with samples 
from our variational distributions
and drop terms not dependent on $\theta$. We use SGD to update all model parameters,
which is possible when we assume that the covariance matrices have diagonal structure. 

\section{Results}
We demonstrate p-dLDS in a variety of synthetic examples, highlighting improved robustness to noise and system nonlinearity. Additionally, we apply our model to a clinical neurophysiology dataset, revealing interpretable patterns where previous methods fail. We compare our method against SLDS, rSLDS, and dLDS as described in Section \ref{sec:related_work}. All datasets are split 50:50 for training and testing. Due to space constraints, we provide full descriptions of the simulation setup in Appendix \ref{appendix:synthetic_examples} and metric definitions in Appendix \ref{appendix:metrics}.

\subsection{Synthetic Dynamical Systems}

\begin{figure}[!htb]
    \centering
    \includegraphics[width=\textwidth]{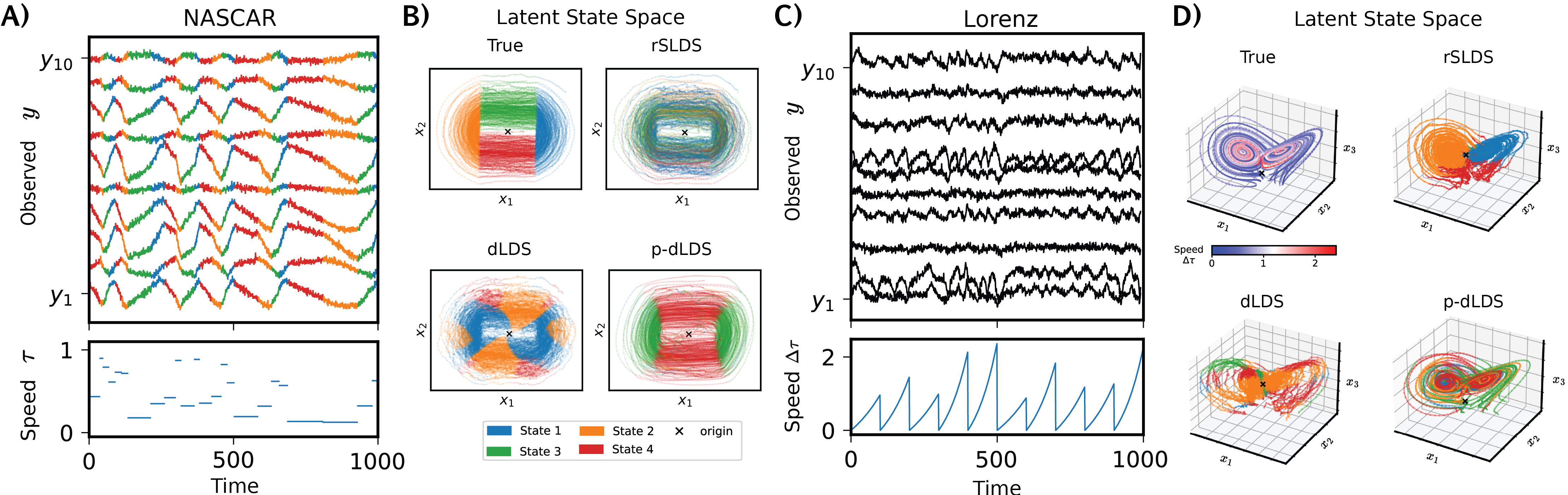}
    \caption{ {\bf Probabilistic model and offset term reduce estimation errors.}
    {\bf (A)} Example trial from the NASCAR experiment colored by the true switching labels (not provided during training). Each track segment has a random speed $\tau$.
    {\bf (B)} Inferred state space, colored by discrete state or dominant coefficients. p-dLDS identifies correct track segments.
    {\bf (C)} Example trial from the Lorenz experiment. The speed ramps according to the time intervals $\Delta \tau$ in an ODE solver.
    {\bf (D)} Inferred state space, colored by the dominant coefficients. The time-varying offset term allows p-dLDS coefficients to switch according to the true speed and accurately model the two fixed points in the opposing lobes.
    }
    \label{fig:nascar}
\end{figure}

 \begin{table*}[htbp]
\caption{Metrics for synthetic dynamical systems. Bold means best performance. $(\uparrow)$ indicates higher score is better while $(\downarrow)$ indicates that lower is better. \xmark $ $ indicates that value diverged towards $-\infty$. Switch events for decomposed models are defined as times where the active set of DOs change from the previous time step.}
\label{table:nascar_lorenz}
\vskip 0.15in
\begin{center}
\begin{small}
\begin{tabular}{ccccccc}
\toprule
 & \multicolumn{ 3}{c}{NASCAR} & \multicolumn{ 3}{c}{Lorenz}  \\
 \cmidrule(lr){2-4} \cmidrule(lr){5-7}
Model & \thead{Dynamics MSE ($\downarrow$) \\ ($\times 10^{-3}$ )} & \thead{Switch MSE ($\downarrow$) \\ ($\times 10^{-3}$ )}  & \thead{100-step \\ $R^2$ ($\uparrow$)} & \thead{Dynamics \\ MSE ($\downarrow$)} & \thead{Switch \\ MSE ($\downarrow$)}  & \thead{100-step \\ $R^2$ ($\uparrow$)}
\\
\midrule
SLDS  &  0.0995 & 12.89 & 0.184 & 0.431 & 0.0204 & -3.47  \\
rSLDS & 0.1065 & 13.17  & 0.238  & 0.304 & 0.0208 & -11.54 \\
 dLDS & 123.19 & 13.28  & \xmark  & 1.123 & 0.1529 & \xmark \\
p-dLDS (ours) & {\bf 0.033} & {\bf 7.34 } & {\bf 0.450}  & {\bf 0.141} & {\bf 0.0137} & {\bf 0.418}    \\
\bottomrule
\end{tabular}
\end{small}
\end{center}
\vskip -0.1in
\end{table*}

\noindent {\bf NASCAR with Random Speeds.} 
We evaluate our inference procedure on the NASCAR dataset~\citep{linderman2016recurrent, Nassar2018b, lee2023switching}. Since this system is easily mean-centered, we can isolate the effect of our proposed inference procedure as offset terms are not necessary.
To make this dataset more realistic, we introduce speed variability into the ground truth dynamics as opposed to having a perfect constant speed at all time. Specifically, whenever a trajectory enters a new segment of the track, the system experiences a random change in speed.
We trained all models on 30 trials, each consisting of 1000 time steps (Fig.~\ref{fig:nascar}A) with randomly sampled initial points, and a randomly constructed 10-dimensional linear observation matrix (see Appendix \ref{sec:nascar} for full details). Performance is evaluated on 30 held-out trials where the ground truth switching states are defined by the different segments of the track. In all models, we set the $M=10$, $N=2$ and $K=4$ for DOs or switching states. In our experiments, we define the "discrete states" for decomposed models as the DO state with the largest coefficient magnitude.

Figure \ref{fig:nascar}B shows that changes in the system speed mask the true transition behavior between segments of the track in rSLDS. Moreover, dLDS identifies coherent segments, but inappropriately learns a different switching pattern for outer and inner edges of the track. In contrast, p-dLDS identifies a switching pattern most consistent with the true track segments despite the presence of noise and randomness in the system's speed. We note that while there are four true segments, decomposed models form a more parsimonious representation by identifying similar behaviors in different track segments such as in both edges and curves. 

Table \ref{table:nascar_lorenz} summarizes our quantitative evaluations on three metrics: 1) the mean squared error (MSE) between the learned and ground truth latent dynamics, 2) the MSE between the inferred and true switch rate to determine agreement of the discrete switching behavior,  
and 3) the 100-step inference $R^2$ to demonstrate that the learned system generalizes beyond a single step on held-out data. (See Appendix \ref{appendix:metrics} for mathematical definitions). We see that p-dLDS broadly outperforms existing methods in all metrics and significantly improves inference for decomposed models.

\noindent {\bf Lorenz System with Random Ramping.} Next, we consider the Lorenz system, a chaotic nonlinear system with multiple fixed points, exploring the effect of the offset term. The system is described by the differential equation, $\bm{\dot{x}} = \begin{bmatrix}\sigma (x_2-x_1), x_1 (\rho- x_3) - x_2, x_1 x_2 - \beta x_3 \end{bmatrix}^\top$ where the parameters $\rho = 28$, $\beta = 8/3$, and $\sigma = 10$ define a chaotic attractor with two opposing lobes (Fig. \ref{fig:nascar}D). 
We introduce continuous fluctuations in the underlying dynamics by randomly ramping the system's speed throughout each trajectory. This is accomplished by adjusting the evaluation time intervals given to an ODE solver.
Similar to before, we randomly construct a linear observation function with $M=10$ and train models on 30 randomly constructed trials with 1000 time points each (see Appendix~\ref{sec:ramping_lorenz}). Furthermore, we define the ground truth switch events as the time points when the signal transitions between the two lobes in addition to the moments when a ramping period concludes.
All models are trained with a latent space of $N=3$ and $K=4$ states or DOs.

In figure \ref{fig:nascar}D, we see that rSLDS does not distinguish between the different speeds along the outer and inner sections of the attractor. Instead, the discrete states obscure the continuum of speeds by incorrectly grouping all activity in each lobe into a single regime.
Furthermore, we observe that dLDS is limited without an offset term, unable to accurately represent multiple fixed points. 
Instead of aligning with the two attractor lobes, transitions in the dominant coefficients occur radially relative to the origin and fail to reconstruct the two orbiting fixed points.
Conversely, p-dLDS's offset term enables learning a system where coefficients better match the true geometry. This representation correctly recovers differences between the outer and inner sections of the attractor while also accurately reconstructing the two orbiting fixed points.
Moreover, this leads to improved estimation of latent dynamics, a switching rate that agrees with the true system, and improved multistep inference performance as shown in Table \ref{table:nascar_lorenz}.

\subsection{Simulated Motor Cortex Data in a Reaching Task}
\begin{figure}[!htb]
    \centering
    \includegraphics[width=\textwidth]{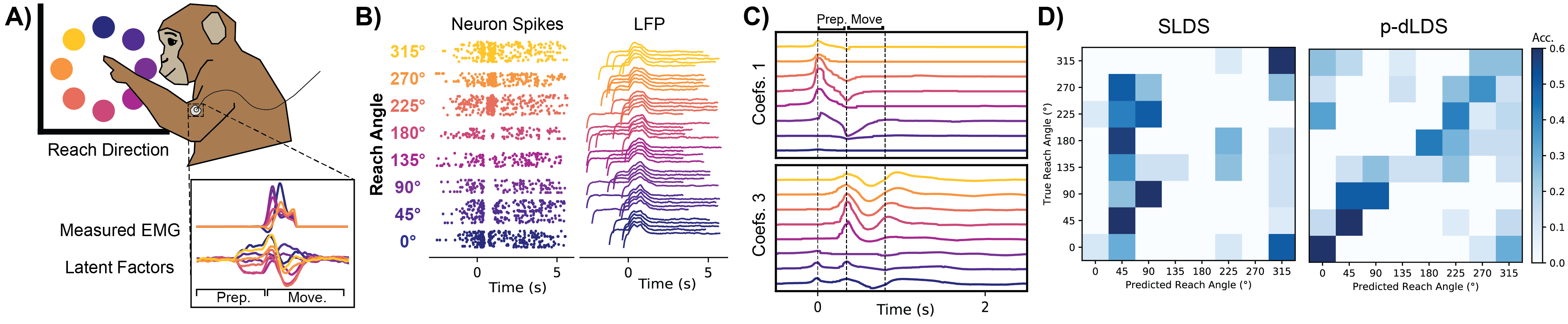}
    \caption{
    {\bf p-dLDS efficiently captures changes in dynamics.}
    {\bf (A)} Latent factors are computed from empirical EMG of the reaching experiment in \cite{depasquale2023centrality}.
    Dynamics are characterized by a preparatory and movement phase.
    {\bf (B)} Synthetic spikes and LFPs are generated using the {\tt wslfp} package \cite{johnsen2023cleo, johnsen24wslfp}
    {\bf (C)} The trial-averaged coefficients for p-dLDS smoothly vary with reaching angle. DO 1 captures preparatory dynamics while DO 3 captures movement dynamics.
    {\bf (D)} Confusion matrix for linear classification of reach directions. p-dLDS predictions closely align to true diagonal.
    }
    \label{fig:monkey_reach}
\end{figure}

\begin{table}
    \caption{Inference performance for the reaching experiment (see Figure \ref{fig:monkey_reach}) on a held-out test set. 
    Top-1 and Top-3 accuracies 
    are obtained by predicting reach directions from latent variable features using linear classifiers. 
    State and Dynamics MSE are computed with respect to true latent variables.
    }
    \centering
    \begin{tabular}{ ccccccc } 
    \toprule
    Model & Top-1 Acc. ($\uparrow$)  & Top-3 Acc. ($\uparrow$) & \thead{State MSE ($\downarrow$) \\ $(\times 10^{-1})$}&  \thead{Dynamics MSE ($\downarrow$) \\ $(\times 10^{-2})$}\\ 
    \hline
    SLDS & 38.46 & 57.69 &  0.5289 & 0.3942 \\ 
    rSLDS &  12.82 & 32.05 & 0.5503 & 292.41 \\ 
    dLDS & 10.25 & 39.74 & 0.6742 & 35.680\\ 
    pdLDS (ours) & {\bf 42.31} & {\bf 70.51} & {\bf 0.4061} & {\bf 0.0567}  \\
    \bottomrule
    \end{tabular}
    \label{tab:reaching}
\end{table}

We now turn to an empirically-derived synthetic experiment related to brain-computer interfaces, where the dynamics and observation functions are nonlinear and derived from analysis of neural data.
Our focus is on the reaching task, a neuroscience experiment designed to study motor control in non-human primates \cite{kaufman2013roles, lara18different}. In this experiment, the subjects are trained to reach towards visually cued targets, while neural activity is recorded from motor-related areas such as electromyography (EMG) data from arm muscles. Each trial consists of two distinct phases: preparation and movement. In the preparation phase, the subject plans its movement while keeping their arm still. In the movement phase, the subject physically reaches towards the target. The goal of this experiment is to decode reach intention from neural data. 
We construct a dataset by first simulating a spiking neural network with known latent factors~\cite{depasquale2023centrality} trained to reproduce empirical EMG signals from the center-out reach task in
~\citep{lara18different}.
Spikes are then converted to 50-channel local field potentials (LFP) recordings via a weighted, delayed sum of synaptic currents (see Figure \ref{fig:monkey_reach}A and B)~\citep{mazzoni15computing,johnsen2023cleo,johnsen24wslfp}.
Our dataset contains 150 6-second trials sampled at 250 hz, where each trial represents one out of eight reach directions visually cued at a random start time.
PCA identifies that three components captures 98\% of the variance. Thus, we set $M=50$, $N=3$, and $K=4$ DOs or discrete states.

Figure \ref{fig:monkey_reach}C shows the trial-averaged DO coefficients from p-dLDS, which change smoothly and cyclically according to the true reach angle. 
Additionally, the DOs appear to differentiate between the two distinct dynamical regimes, where the activity of $\bm{f}_1$ and $\bm{f}_3$ localize to the preparatory and movement phase respectively.
Quantitatively, we compute the linear classification accuracy of the reach angles using the average state activity over time as features (see Appendix \ref{appendix:class_acc}). Figure \ref{fig:monkey_reach}D shows that classifiers built from SLDS features fail to capture the full continuum of reach angles. 
This limitation occurs because the discrete switching states are unable to efficiently capture the diversity of activity present in the LFPs, which arise from randomness inherent in the spike sampling process. Consequently, the inferred features from SLDS generalize poorly to held-out data.
In contrast, the p-dLDS classifier predictions recover the full spectrum of reach angles since features are naturally continuous and the inferred coefficients can adjust the learned DOs to accurately capture the activity in the held out data. Table \ref{tab:reaching} shows that p-dLDS outperforms all other models in state and dynamics reconstruction as well as the top-1 and top-3 reach classification accuracy.

\begin{figure*}[!htb]
    \centering
    \includegraphics[width=\textwidth]{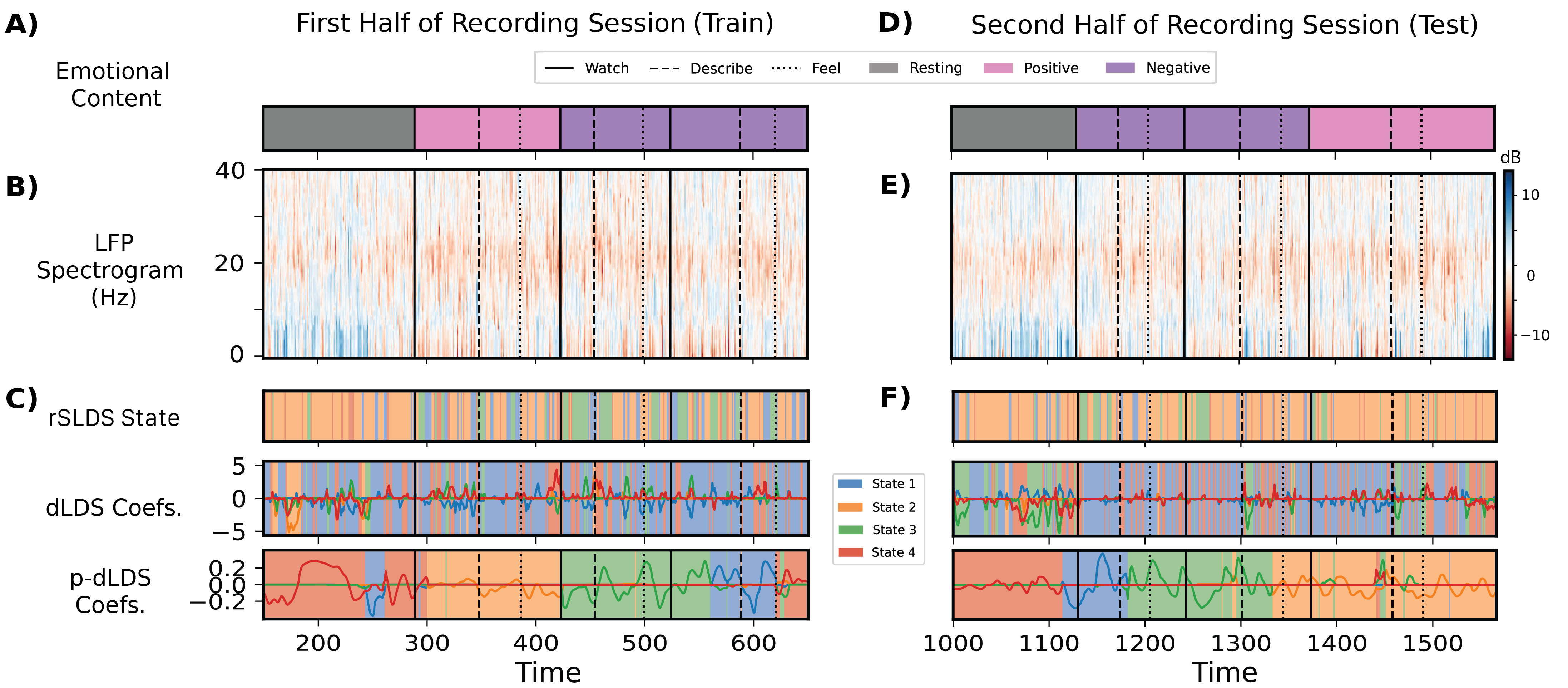}
    \caption{ 
    {\bf Learned system discovers coherent structure in clinical data.}
    {\bf (A, D)} 
    LFP data was collected on patients watching videos with different emotional content. 
    {\bf (B, E)} LFP spectrograms are 40-dimensional signals where each channel represents a particular frequency.
    {\bf (C, F)} Inferred states and coefficients shows that rSLDS and dLDS exhibit unpredictable switching behavior. In contrast, p-dLDS captures smooth coefficients and identifies DOs that align with the trial's emotional content.
    The learned patterns broadly generalize to the held-out data.
    }
    \label{fig:emotional_vids}
\end{figure*}

\subsection{Clinical Neurophysiology Data}
\label{sec:clinical_experiment}
We demonstrate p-dLDS on LFP recordings from the subcallosal cingulate cortex (SCC) in patients with treatment-resistant depression (ClinicalTrials.gov identifier NCT01984710). Subjects are asked to watch videos with different emotional content (positive, negative, and neutral), describe the videos, and then discuss how the video made them feel. SCC dynamics have been previously shown to provide a quantitative signal for the presence of emotional content \citep{huebl} and depression recovery~\citep{Alagapan2023}. Thus, we hypothesize that the underlying dynamics may provide information about emotional changes throughout the experiment. We apply p-dLDS to a single patient's LFP spectrogram data (Fig.~\ref{fig:emotional_vids}B, E) within the 0-40 Hz frequency range ($M=40$).
PCA indicates that the first 7 components explain 90\% of the variance. Therefore, we train a model with $N=7$ latent dimension and $K=4$ DOs.

In Figure \ref{fig:emotional_vids}, rSLDS and dLDS produces a high degree of state oscillations making it difficult to identify time intervals with consistent emotional content. In contrast, p-dLDS infers coherent structure that corresponds to changes in emotional content in the trial.
For example, $\bm{f}_4$ (red) coincides with resting, $\bm{f}_2$ (orange) with positive videos, and $\bm{f}_1$ and $\bm{f}_3$ (blue and green) to negative videos (Fig.~\ref{fig:emotional_vids}C). Importantly, this structure persists even on held out data from the second half of the session (Fig.~\ref{fig:emotional_vids} F). We note this preliminary analysis on a single subject isn't intended to make a claim about specific neurophysiological responses to emotional content in this brain region, but generally highlights that p-dLDS identifies meaningful dynamical modes where previous models are unable to.

\section{Conclusion}
In this work, we present a probabilistic decomposed linear dynamical systems model that can be used to discover meaningful representations in neural signals. By marginalizing over uncertainty in latent variable estimates and incorporating an offset into the dynamics, we enhance robustness and improve a variety of performance metrics.
Some areas of future work includes exploiting structure in the offsets to automatically identify window size and extending the probabilistic model to include more complicated emissions distributions, such as the Poisson likelihood commonly used to model 
neural spiking data~\citep{macke2011empirical}.

\section*{Acknowledgements and Disclosure of Funding Sources}
Y.C. and C.R. were funded by the James S. McDonnell Foundation (grant number  22002039), with Y.C. being further funded by National Institutes of Health (grant number 2T32EB025816), and C.R. being further funded by the Julian T. Hightower Chair. Y.C. and K.J. were part of the Georgia Tech/Emory NIH/NIBIB Training Program in Computational Neural Engineering (T32EB025816). N.M. was funded by The Kavli Foundation NeuroData Discovery award. A.S.C. were partially supported by the NSF CAREER Award (2340338) and a Johns Hopkins Bridge Grant. S.A. is supported by the National Center for Advancing Translational Sciences of the National Institutes of Health (Award Number UL1TR002378 and KL2TR002381). The content is solely the responsibility of the authors and does not necessarily represent the official views of the National Institutes of Health.

\bibliographystyle{plainnat}
\bibliography{refs}

\newpage

\appendix

\section{Appendix / supplemental material}

\subsection{p-dLDS Algorithm}
Algorithm \ref{alg:pdlds} describes the proposed inference algorithm. In our experiments, we set $n=1$ and $\eta=10^{-4}$ and observe that the model converges. Below we use the notation hat notation for latent variable estimates or samples and the variable itself to represent the parameters of the variational distributions.

\begin{algorithm}
\caption{Variational EM for Probabilistic dLDS}\label{alg:pdlds}
\begin{algorithmic}
\Require $M$ observation dimension, $N$ latent state dimension, $K$ number of dynamic operators, $S$ moving average window size, $\xi$ SBL-DF trade-off parameter, $n$ number of samples to estimate expectations, $\eta$ sparsity threshold, $\theta$ model parameters.

\State
\State {\bf // Initialize parameters}
\State $\bm{c}_t \gets \bm{0}$
\State $D_{i,j} \sim \mathcal{N}(0, \sigma^2)$
\State $f_{k,i,j} \sim \mathcal{N}(0, \sigma^2)$
\State $\widehat{\bm{x}}_t \gets {\bm D}^+  \bm{y}_t$ \Comment{Initialize latent state with PCA}
\State 
\While{ ELBO has not converged}
    \State {\bf // Update Latent State Posterior}
    \State $\widehat{\bm{b}}_{1:T} { \gets \rm MovingAverage}_S (\widehat{\bm{x}}_{1:T})$
    \State $\widehat{\bm{c}}_t \sim q(\bm{c}_t)$
    \State $\bm{\widehat{F}}_t \gets \sum_{k=1}^K \bm{f}_k\widehat{c}_{k,t}$
    \State $\bm{l}_{1:T}, \bm{\Sigma}_x \gets {\rm KalmanSmoother}(\bm{y}_{1:T}, \bm{\widehat{b}}_{1:T}, \bm{\widehat{F}}_{1:T}, \theta)$
    \State
    \State {\bf // Update Coefficient Posterior}
    \State Initialize $q(\bm{c})$ and $q(\bm{\gamma})$ jointly with SBL-DF.
    \State Update $q(c_{t,k})$ with SGD over equation~\eqref{eq:c_step2} for densities where $|c_{t,k}|>\eta$.
    \State Update $q(\gamma_t) \gets \mathcal{IG}(\xi + \frac{n}{2}, \xi c_{t-1,k}^2 + \frac{\sum_{i=1}^n(\tilde{c}_{t,k,i} - c_{t,k})^2}{2})$
    \State
    \State {\bf // Update Parameters}
    \State Update $\theta$ with SGD over equation~\eqref{eq:theta_update}.
\EndWhile
\end{algorithmic}
\end{algorithm}

\section{Latent Variable Inference}
\subsection{Lemma \ref{prop:offset_solution} Derivation}

\offsetprop*

\begin{proof}
\label{proof:offset_solution}
Let $\bm{r}_t = \bm{x}_{t+1} - \bm{x}_{t}$. We can rewrite the reconstruction objective in the following form,

    \[ \argmin_{\bm{F}_t, \bm{b}_t} \| \bm{r}_t - \bm{F}_t \bm{x}_t - \bm{b}_t \|_2^2 + \lambda \|\bm{F}_t \|_2^2. \]

This objective is identical to the standard ridge regression with an unpenalized intercept term \cite{hastie2009elements}. The solution is obtained by first centering the data, and then solving for the parameters using the solution for the standard Tikhonov regression. Below, we define the centered data as $\bm{\tilde{x}}_t$  and $\bm{\tilde{r}_t}$ for inputs and outputs respectively. Finally, we can use these values to obtain the following estimates of the parameters, 

\begin{align*}
    \bm{\widehat{b}}_t = \bm{\mu}_t \qquad
   \widehat{\bm{F}}_t = (\bm{\tilde{x}}_t^\top \bm{\tilde{x}}_t  + \lambda \bm{I})^{-1} \bm{\tilde{x}}_t^\top \bm{\tilde{r}}_t \\
\end{align*}

However, when there is only a single datapoint, we get that $\tilde{\bm{x}}_t = 0$, which results in $\widehat{\bm{F}} = 0$. 
\end{proof}

This result arises from having only a single observation for any dynamic transition, which leads to a singular design matrix. Although we can improve our estimate of $\bm{F}_t$ by collecting more samples along a given trajectory, this is impractical when dealing with naturalistic time-series. For instance, it may be infeasible to collect more data from the exact same initial condition in a naturalistic environment due to noise in the experimental setup. In chaotic systems, minor deviations can lead to drastically different outcomes over long time horizons. Even if it were possible to precisely control for the initial condition of the signal, the presence of dynamical noise can cause initially aligned time series to quickly drift out of alignment. Consequently, it is not uncommon to observe a single transition between any two time points, as it is not guaranteed that events across multiple trials will be well-aligned. 

\subsection{Lemma \ref{prop:offset_reparam} Derivation}
\label{proof:offset_reparam}

\offsetreparam*

\begin{proof}
    {\bf Case 1.} Let $\bm{l} \sim \mathcal{N}(\bm{\mu}_l, \bm{\Sigma}_l)$ and $\bm{b} \sim \mathcal{N}(\bm{\mu}_b, \bm{\Sigma}_b)$. The sum of normal random variables follows a distribution that results from convolving their individual distributions,
    \begin{align*}
        q(\bm{x}) &= q(\bm{l} + \bm{b}) \\
        &= q(\bm{l} ) \ast q(\bm{b}) \\
        &= \mathcal{N}(\bm{\mu}_l, \bm{\Sigma}_l ) \ast \mathcal{N} (\bm{\mu}_l,\bm{\Sigma}_b) \\ 
        &= \mathcal{N}(\bm{\mu}_l + \bm{\mu}_b, \bm{\Sigma}_l+\bm{\Sigma}_b)
    \end{align*}
    This is a standard result from probability theory. 

    {\bf Case 2.} Now let $\bm{l} \sim \mathcal{N}(\bm{\mu}_l, \bm{\Sigma}_l + \bm{\Sigma}_b)$ and $\bm{b} \sim \delta (\bm{b} -\bm{\mu}_b)$. Similarly, the distribution of the sum of these variables is distributed according to their convolution,
        \begin{align*}
        q(\bm{x}) &= q(\bm{l} ) \ast q(\bm{b}) \\
        &= \mathcal{N}(\bm{\mu}_l, \bm{\Sigma}_l +\bm{\Sigma}_b) \ast \delta ( \bm{b} - \bm{\mu}_b) \\ 
        &= \int_{-\infty}^\infty \mathcal{N}( \bm{x}-\bm{\tau}; \bm{\mu}_l, \bm{\Sigma}_l + \bm{\Sigma}_b) \delta(\bm{\tau} - \bm{\mu}_b) d \bm{\tau} \\
        &= \mathcal{N}( \bm{x} +\bm{\mu}_b; \bm{\mu}_l, \bm{\Sigma}_l + \bm{\Sigma}_b) \\
        &= \mathcal{N}(\bm{x}; \bm{\mu}_l + \bm{\mu}_b, \bm{\Sigma}_l+\bm{\Sigma}_b),
    \end{align*}
    where the fourth line is the result of the sifting property of delta distributions. Since the final distribution in Case 1 and Case 2 are identical, we complete the proof. 
\end{proof}

\subsection{Optimal $q(x)$ Update}

The optimal coordinate ascent variational update is given by the following equation, 
\begin{align}
    \begin{split}
        \log q^*(\bm{x}) 
        &\propto 
    \mathbb{E}_{q(\bm{c}, \bm{\gamma})}[\log p(\bm{x}, \bm{c}, \bm{y}, \bm{\gamma} | \theta)] \\
        &= \mathbb{E}_{q(\bm{c}, \bm{\gamma})}[ \log p(\bm{l}_1 | \theta) 
          + \sum_{t=2}^T \log p(\bm{l}_t| \bm{l}_{t-1}, \bm{c}_{t}, \theta) 
        + \sum_{t=1}^T \log p(\bm{y}_t | \bm{l}_t + \bm{b}_t, \theta)] + C.
    \end{split}
    \label{eq:qx_update}
\end{align}
Conditioned on estimates of $\bm{b}_{1:T}$ and samples of $\bm{c}_{1:T}$, the factor graph of equation~\eqref{eq:qx_update} corresponds exactly to a time-varing Linear Gaussian State Space Model. Thus we can leverage the efficient inference algorithms such as the Kalman filter and RTS smoother when computing the marginals of the variational distribution of $\bm{l}_{1:T}$.

\section{Generating Synthetic Examples}
\label{appendix:synthetic_examples}

\subsection{Noisy NASCAR}
\label{sec:nascar}
NASCAR data is generated by partitioning the two-dimensional state space into four regions according to the rules,
\begin{align*}
    Z(\bm{x}) =
    \begin{cases} 
      1, & x_1 > 1 \\
      2, & x_1 < -1 \\
      3, & -1 \leq x_1 \leq 1, x_2 \geq 0 \\
      4, & -1 \leq x_1 \leq 1, x_2 < 0,
   \end{cases}
\end{align*}
where $Z(\bm{x})$ is the ground truth switching state function that depends on the particular location $\bm{x}$. The ground truth dynamics matrices are defined as,
\begin{align*}
    \bm{A}(\bm{x}) =
    \begin{cases} 
      \begin{bmatrix}
          0 & 0.1\\
          -0.1& 0
      \end{bmatrix}, & \textrm{when } Z(\bm{x}) = 1\textrm{ or }2
      \\[10pt]
      \begin{bmatrix}
          0 & 0\\
          0& 0
      \end{bmatrix}, & \textrm{when } Z(\bm{x}) = 3\textrm{ or }4,
   \end{cases}
\end{align*}
and ground truth offsets are defined as,
\begin{align*}
    \bm{b}(\bm{x}) =
    \begin{cases} 
        \begin{bmatrix}
          0 & 0.005
      \end{bmatrix}^\top, & \textrm{when } Z(\bm{x}) = 1\\[5pt]
      \begin{bmatrix}
          0 & -0.005
      \end{bmatrix}^\top, & \textrm{when } Z(\bm{x}) = 2\\[5pt]
      \begin{bmatrix}
          0.1 & 0
      \end{bmatrix}^\top, & \textrm{when } Z(\bm{x}) = 3 \\[5pt]
      \begin{bmatrix}
          -0.1 & 0
      \end{bmatrix}^\top, & \textrm{when } Z(\bm{x}) = 4.
   \end{cases}
\end{align*}
Given the current location in state space $\bm{x}_t$, we can transition to the next point using the continuous time dynamics equation
\begin{align*}
    \bm{x}_t = {\rm expm}(\tau \bm{A}_{Z(\bm{x}_t)}) \bm{x}_{t-1} + \tau \bm{b}_{Z(\bm{x}_t)} + \bm{\nu}_t,
\end{align*}
where each entry of the process noise is sampled from $\nu_{t,i}\sim\mathcal{N}(0, 10^{-4})$. To modulate the speed of the system, we uniformly sample a speed constant $\tau\in [0.1,1]$, which is applied throughout each segment of the track. We use the continuous time formulation over the discrete-time formulation to ensure that changes to the speed do not distort the shape of the original system's state space. To generate noisy observations, we construct a linear emissions matrix with random variables such that each entry is given by $D_{i,j} \sim \mathcal{N}(0,1)$.

\subsection{Ramping Lorenz}
\label{sec:ramping_lorenz}
In order to modulate the speed of the Lorenz system, we adjust the evaluation time points of an ODE integrator, specifically Runge-Kutta of the order 5(4) (RK54) as implemented in scipy's \texttt{solve\_ivp}~\citep{dormand1980family}. Ramping activity is generated randomly with the following procedure,
\begin{enumerate}
    \item Uniformly sample an evaluation interval length  $\tau \in [0.25, 1.5]$.
    \item Construct a vector $\widetilde{T}$ that consists $n$ evenly spaced numbers over the interval $[0,\tau]$. In our experiments, we set $n$ to be 100. 
    \item Perform the transformation $\exp(\widetilde{T})-1$ to obtain a vector of ramped evaluation times.
    \item Plug in the transformed evaluation times into the RK45 Solver to obtain latent trajectories.
\end{enumerate}
Similar to the NASCAR experiment, we generate noisy observation from a randomly constructed linear emissions matrix such that each entry is given by $D_{i,j} \sim \mathcal{N}(0,1)$.

\subsection{Simulated Monkey Reaching Task}
\label{sec:monkey_reach}

Our dataset is constructed from publicly available data and code from the center-out reach task in \cite{lara18different,depasquale2023centrality}. We obtain latent factors from spiking networks that are trained to reproduce empirically measured EMG signals, given a 3-dimensional input that specifies the go input and the reach angle. In our experiments, these factors are considered ground truth. Our trained factor-based spiking network then generates spiking activity for 1200 neurons. Synaptic currents are used as inputs into the Weighted Sum of synaptic currents LFP proxy method (WSLFP)~\citep{mazzoni15computing}, as implemented in the \texttt{wslfp} Python package~\cite{johnsen2023cleo,johnsen24wslfp}. As WSLFP is a function of the relative location of neurons and electrodes, we place neurons randomly within a 5 mm by 10 mm by 1 mm region and electrodes in a grid centered in this region. The result is a multi-channel LFP dataset with nonlinear dynamics and measurements characteristic of systems neuroscience.

\begin{figure*}[!htb]
    \centering
    \includegraphics[width=\textwidth]{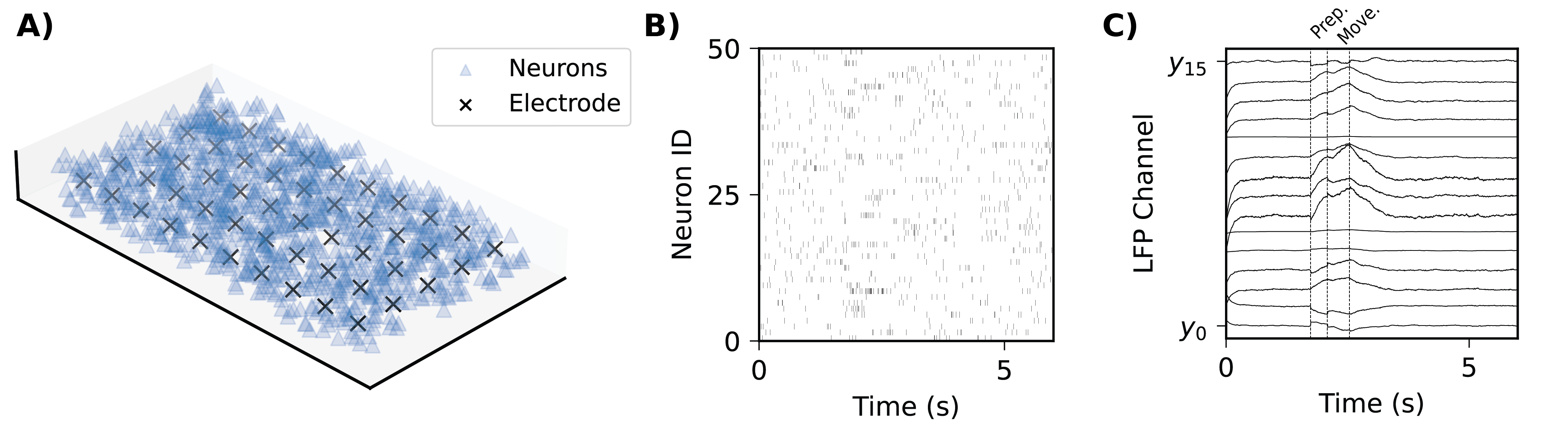}
    \caption{ 
    {\bf Empirically-Derived Reach Experiment.}
    {\bf (A)} 1,200 neurons are randomly placed into a 5 mm by 10 mm by 1 mm region. Electrodes are placed in a grid centered in this region
    {\bf (B)} Spiking activity for a subset of neurons in an example trial produced from a factor-based spiking network.
    {\bf (C)} First 15 channels in a simulated multi-channel LFP recording. Preparatory and Movement phases are marked by the dotted lines.
    }
    \label{fig:lfp_setup}
\end{figure*}

\section{Evaluation Metrics}
\label{appendix:metrics}

\subsection{Multi-step Inference}
\label{appendix:multistep_inference}
The multi-step inference performance is computed with the following R-squared metric,
\begin{align}
    R_k^2 = 1-\frac{\sum_{t=0}^{T-k} \|\bm{y}_{t+k} - \bm{\widehat{y}}_{t+k} \|_2^2}{\sum_{t=0}^{T-k} \| \bm{y}_{t+k} - \bm{\bar{y}} \|_2^2},
\end{align}
where $k$ is the number of steps from the initial condition, $\bm{\bar{y}}$ is the mean estimator for each trajectory and $\bm{\widehat{y}}_{t+k}$ is the model prediction after applying the inferred dynamics for $k$ steps. When testing, model parameters such as the dynamics and observation matrices are frozen, while specific latent variables are estimated based on the held-out data. In Table \ref{table:nascar_lorenz}, we show results for $k=100$.

\subsection{Inferred Dynamics Error}

\label{appendix:inferred_speed_error}
We measure the accuracy of the latent dynamics with the mean squared error (MSE) of the inferred speed, defined as,
\begin{align}
    {\rm MSE}_{\rm speed} = \frac{1}{T-1}  \sum_{t=1}^{T-1}\| \bm{\dot{x}_t} - \bm{U} \bm{\widehat{\dot{x}}}_t\|_2^2,  
\end{align}
where the true speed $\bm{\dot{x}}_t =\bm{x}_{t+1} - \bm{x}_{t}$ is computed from the denoised ground truth latent state, and the predicted speed $\bm{\widehat{\dot{x}}}_t = \bm{\widehat{x}}_{t+1} - \bm{\widehat{x}}_{t}$ is computed using the model's 1-step prediction. Since latent trajectories are only identifiable up to a linear transformation, we align the inferred trajectories with the true trajectories using a least squares fit before computing this score. More specifically, we find the optimal linear transformation $\bm{U}\in\mathbb{R}^{N\times N}$ between the estimated and true states across all trajectories by solving,
\begin{align}
    \label{eq:align}
    \bm{\widehat{U}} = \argmin_{\bm{U}} \frac{1}{T} \sum_{t=1}^T \| \bm{x}_{t} - \bm{U} \bm{\widehat{x}}_t \|.
\end{align}

\subsection{Inferred Latent State Space Error}
\label{appendix:inferred_state_error}
Similarly, we measure the accuracy of the latent state space by computing the MSE after a linear alignment between trajectories from the inferred and true state space. We use this metric only for the reaching example, since the true observation function is a complex nonlinear function,
\begin{align}
    {\rm MSE}_{\rm state} = \frac{1}{T}  \sum_{t=1}^{T}\| \bm{x_t} - \bm{U} \bm{\widehat{x}}_t\|_2^2.
\end{align}
The linear alignment $\bm{U}\in\mathbb{R}^{N\times N}$ between the estimated and true states across all trajectories is computed by solving the least squares problem in equation~\eqref{eq:align}.

\subsection{Inferred switching rate error}
\label{appendix:inferred_switch_error}
Evaluating the accuracy of the switching behavior is a more difficult task. In fact, developing a procedure that matches predicted switch times with true switch times can lead to a complicated optimal transport procedure. To simplify the evaluation of switching times, we marginalize over time, and compare only the MSE of the switch rate defined as,
\begin{align}
    {\rm MSE}_{\rm switch} = \frac{1}{m}  \sum_{i=1}^{m}\| r_i -\widehat{r}_i\|_2^2,  
\end{align}
where $m$ is the number of trials, $r_i$ is the true switch rate for the $i$th trajectory, and $\widehat{r}_i=\frac{1}{T} \sum_{t=1}^T{\bf 1}\{z_t \neq z_{t-1}\}$ is the predicted switch rate. Intuitively, $\widehat{r}_i$ is the number of times that the state or dominant DO changes between consecutive time points  normalized by the length of the interval $T$. In switching models, switch events are defined as a time point where the current inferred dynamical state differs from the state in the previous time step. Similarly in decomposed models, switch events are defined as time points where the active set of DOs change from the previous time step. 

In the NASCAR example, $r_i$ is defined with the number of transitions between ground truth segments. In the Lorenz example, $r_i$ is defined by the number of times that the trajectory switches between the two lobes in addition to the number of ramping periods.

\subsection{Reaching Classification Accuracy}
\label{appendix:class_acc}

We quantitatively evaluate the reaching experiment with a classification task. Here, we want to determine whether the learned systems can be used to distinguish between different reach directions. Recall that switched models infer a switching variable for each time point where $z_t \in \{1,\dots, K\}$ while decomposed models infer a coefficient vector $\bm{c}_t \in \mathbb{R}^K$. Rather than viewing $z_t$ as an index, we can equivalently view it as a one-hot encoded vector $z_t \in \{0,1\}^K$ which describes whether a particular switching state is active at any given time. This matches the dimensionality of the variables in both switched and decomposed systems. 

For simplicity, we focus on linear logistic regression classifiers in our experiment. If we let the inputs be $z_t$ and $c_t$ directly, then our classifiers quickly overfits since there are many more input features than trials. Specifically, the number of features scales linearly with the number of time points and systems $\mathcal{O}(TK)$. Instead, we marginalize over time and compute features from the estimated latent variables by averaging state activity over time. In switched models, this is the average one-hot encoding value over time. Similarly, this is the average coefficient value in decomposed models. However, for each dynamical state, we compute separate features for positive and negative coefficient values to prevent interference between them. In this setup, the input (feature) dimensionality scales according to $\mathcal{O}(K)$ while the output dimensionality of the linear classifiers are the reaching directions. For all classifiers, we perform a grid search over the values $\{10^{i}\}_{i=-4}^4$ to identify an appropriate amount of L2 regularization. Top-k accuracies are a standard metric in machine learning~\citep{lapin2016loss, berrada2018smooth} and computed using the estimated class probabilities from the logistic regression classifier.

\section{ Additional Results}
\label{sec:additional_results}

\subsection{Synthetic Dynamical Systems}
Figure~\ref{fig:nascar_lorenz_appendix}A demonstrates that our inference procedure converges to a local optimium while Figure~\ref{fig:nascar_lorenz_appendix}B shows a full sweep of the multi-step inference metric. Tables \ref{table:nascar_lorenz} in the main paper reports the final value. For completeness, we include Tables \ref{table:nascar_appendix} and \ref{table:lorenz_appendix} which reports the means across 5 seeds of each model, and includes the standard deviations in parenthesis. 

\begin{figure}[!htb]
    \centering
    \includegraphics[width=\textwidth]{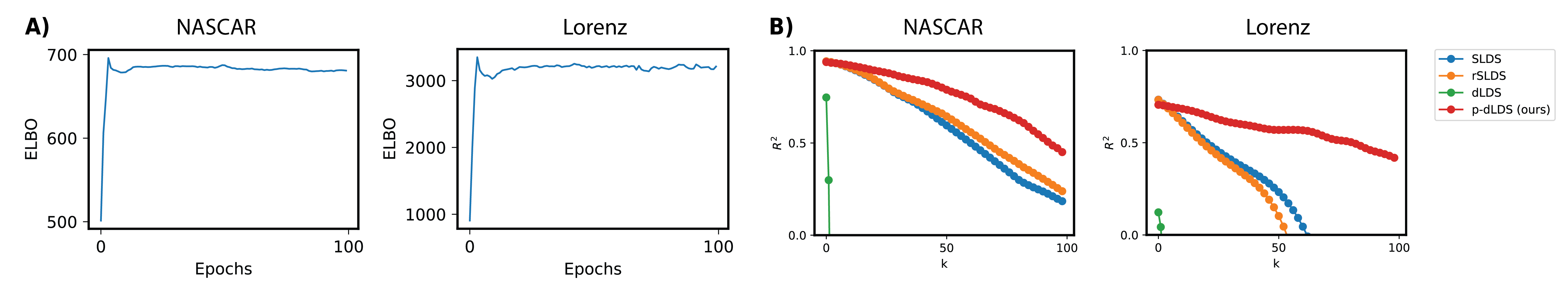}
    \caption{
    {\bf (A)} ELBO converges in both synthetic dynamical systems.
    {\bf (B)} Multi-step inference where $k$ represents the number of steps. Tables \ref{table:nascar_appendix} and \ref{table:lorenz_appendix} report the final values. }
    \label{fig:nascar_lorenz_appendix}
\end{figure}

 \begin{table*}[h]
\caption{Metrics for NASCAR. Bold means best performance. $(\uparrow)$ indicates higher score is better while $(\downarrow)$ indicates that lower is better. \xmark $ $ indicates that value diverged towards $-\infty$. All MSE values are $\times 10^{-3}$ while $R^2$ values are not scaled. We report means across 5 seeds and include standard deviation in parenthesis.}
\label{table:nascar_appendix}
\vskip 0.15in
\begin{center}
\begin{small}
\begin{tabular}{ccccccc}
\toprule
Model & Speed MSE ($\downarrow$) & Switch MSE ($\downarrow$)  & 100-step $R^2$ ($\uparrow$) 
\\
\midrule
SLDS  & 0.0995 (0.021) & 12.89 (1.30)  & 0.184 (0.024) \\
rSLDS & 0.1065 (0.024) & 13.17 (2.84)  & 0.238 (0.022)\\
 dLDS & 123.19 (23.13) & 13.28 (5.31)  & \xmark  \\
p-dLDS (ours) & {\bf 0.033} (0.009) & {\bf 7.34 } (3.40) & {\bf 0.450} (0.027)\\
\bottomrule
\end{tabular}
\end{small}
\end{center}
\vskip -0.1in
\end{table*}

 \begin{table*}[h]
\caption{Metrics for Lorenz. Bold means best performance. $(\uparrow)$ indicates higher score is better while $(\downarrow)$ indicates that lower is better. \xmark $ $ indicates that value diverged towards $-\infty$. We report means across 5 seeds and include standard deviation in parenthesis.}
\label{table:lorenz_appendix}
\vskip 0.15in
\begin{center}
\begin{small}
\begin{tabular}{ccccc}
\toprule
Model & Speed MSE ($\downarrow$) & Switch MSE ($\downarrow$)  & 100-step $R^2$ ($\uparrow$) 
\\
\midrule
SLDS  & 0.431 (0.233) & 0.0204 (0.007) & -3.47 (1.052)\\
rSLDS  & 0.304 (0.040) & 0.0208 (0.004) & -11.54 (1.353) \\
 dLDS & 1.123 (0.089) & 0.1529 (0.070) & \xmark \\
p-dLDS (ours) & {\bf 0.141} (0.015) & {\bf 0.0137} (0.014) & {\bf 0.418} (0.079)      \\
\bottomrule
\end{tabular}
\end{small}
\end{center}
\vskip -0.1in
\end{table*}

\subsection{Reaching Task}
For each model, we visualize the trial-averaged dynamic regime activity of each reach direction (Fig. \ref{fig:reach_appendix}). In SLDS, this is visualized by considering the discrete states as a one hot vector over time. When a dynamic regime is active, that state will have a value of 1 while the unactive states will have a value of 0. Thus the trial averaged value of each state must have a value in the interval $[0, 1]$. In dLDS, we plot the inferred coefficient value without any modification. 

Although SLDS correctly identifies preparatory and movement phases using states 4 and 3 respectively, it fails to differentiate dynamics occurring outside of these expected phases, incorrectly grouping unrelated regions together. Furthermore, the discrete formulation produces very similar activity patterns across all reach angles, obscuring any differences that are present. In dLDS, we observe that the features change smoothly and cyclically with the reach angle. However, the dynamic operator activity do not localize to the preparatory and movement phases due to a limited inference procedure.

\begin{table}[!h]
    \caption{Inference performance for the reaching experiment (see Figure \ref{fig:monkey_reach}) on a held-out test set. 
    Top-1 and Top-3 accuracies 
    are obtained by predicting reach directions from latent variable features using linear classifiers. 
    State and Dynamics MSE are computed with respect to true latent variables. We report standard deviations in parenthesis across 5 seeds.
    }
    \begin{tabular}{ ccccccc } 
    \toprule
    Model & Top-1 Acc.  & Top-3 Acc. & State MSE $(\times 10^{-1})$&  Dynamics MSE $(\times 10^{-2})$\\ 
    \hline
    SLDS & 38.46 (2.84) & 57.69 (7.53) &  0.5289 (0.13) & 0.3942 (0.23) \\ 
    rSLDS & 12.82 (3.05) & 32.05 (8.31) & 0.5503 (0.23) & 292.41 (13.96) \\ 
    dLDS & 10.25 (5.97) & 39.74 (10.29) & 0.6742 (0.52) & 35.680 (5.76)\\ 
    pdLDS (ours) & {\bf 42.31} (3.50) & {\bf 70.51} (6.45) & {\bf 0.4061} (0.38) & {\bf 0.0567}   (0.04)\\
    \bottomrule
    \end{tabular}
    \label{table:reaching_appendix}
\end{table}

\begin{figure}[!htb]
    \centering
    \includegraphics[width=\textwidth]{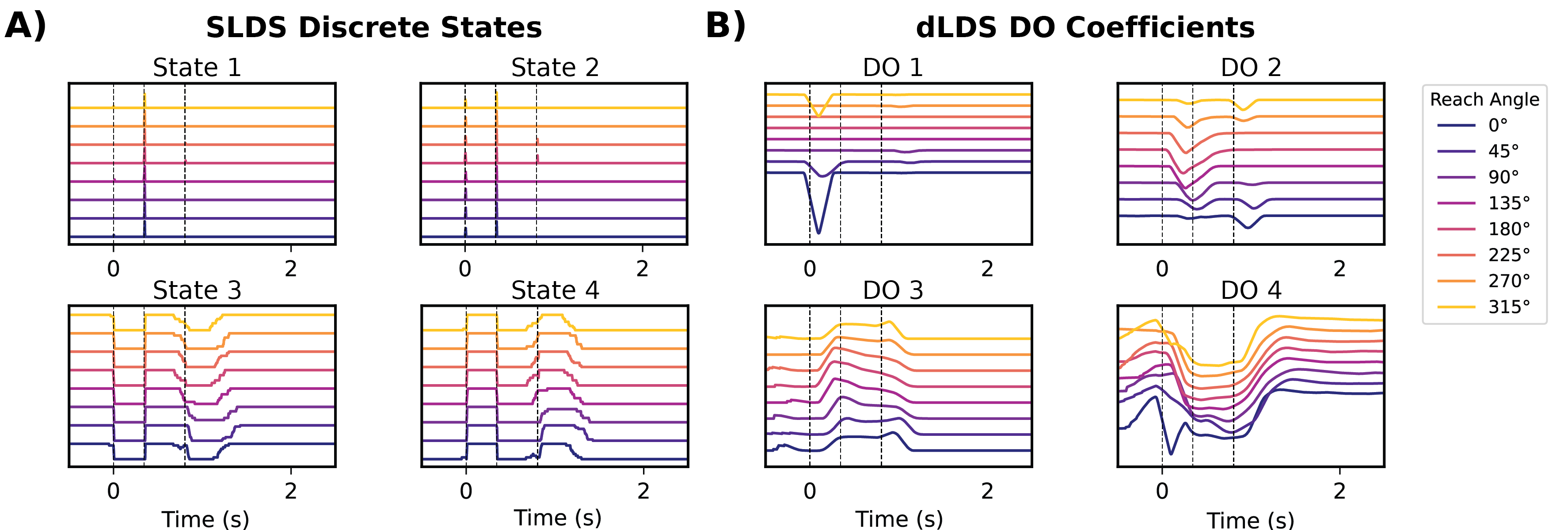}
    \caption{
    Trial-averaged activity for {\bf (A)} SLDS discrete states and {\bf (B)} dLDS DO coefficients for each reach angle. The preparatory and movement phases occur between the dashed lines similar to Figure \ref{fig:monkey_reach}. Time 0 represents the onset of the stimulus.
    }
    \label{fig:reach_appendix}
\end{figure}

\section{Experimental Setup}

\subsection{Hyperparameter Settings}
\label{appendix:hyperparameters}
For switching models, we rely on the {\tt ssm} package which allows for efficient Bayesian inference for a variety of state space models ~\citep{Linderman_SSM_Bayesian_Learning_2020}. 
We set the variational posterior to {\tt structured\_meanfield}, and the fitting procedure to {\tt laplace\_em} as recommended by the developers. Additionally, we set the distributional form of the dynamics and emissions matrices to Gaussian.

The hyperparameters of dLDS primarily consists of the lagrange multipliers in the BPDN-DF objective including $\lambda_0$, $\lambda_1$, $\lambda_2$. We find the optimal value of these hyperparameters using a random search with a fixed budget of 1000 evaluations. For each hyperparameter, we uniformly sample over the log of the interval $[10^{-3}, 10^3]$ and evaluate it against the BPDN-DF objective. For the NASCAR experiment, we found that $\lambda_0=1.044, \lambda_1=0.254,$ and $\lambda_2=0.023$ resulted in the best performance. For the Lorenz experiment, we found that $\lambda_0=0.628, \lambda_1=2.010,$ and $\lambda_2=0.0124$ yielded the best performance. 

For p-dLDS, the relevant hyperparameters consists of the SBL-DF dynamics tradeoff $\xi$, and the offset window size $S$. We use random search with a budget of 1000 samples to determine the values of $S$ and $\xi$ and fit a separate model for each set of hyperparameters. In the NASCAR experiment, we isolate the effect of the probabilistic inference procedure by setting $S=T$, removing the influence of the time-varying offset term. For $\xi$, we perform a random search by uniformly sample over the log of the interval $[10^{-3}, 10^3]$ and found that $\xi=0.945$ was optimal. For the Lorenz experiment, we also optimize for the window size $S$ by uniformly sample a discrete index on the interval $\{2,\dots,T\}$. For the Lorenz experiment, the optimized hyperparameters are $S=85$ and $\xi=8.928$. For the real dataset, the optimal offset is $S=76$ which is smaller than the timescale of p-dLDS coefficient switching (around 150 time points), suggesting that the same DO dynamics may persist even as the fixed points of the system fluctuates throughout the experiment.

\subsection{Hardware Specification}
\label{appendix:hardware}
We perform hyperparameter sweep on our institution's HPC cluster using small-scale CPU resources which consists of Dual Intel Xeon Gold 6226 CPUs. Once hyperparameters have been optimized, it is possible to run each experiment within approximately 2 hours on the 2020 edition of the M1 Macbook Pro.

\section{Description of Clinical Neurophysiology Data}
\label{appendix:neuro_data}
Data was collected as part of a study investigating deep brain stimulation for treatment-resistant depression (TRD). The study is pre-registered in ClinicalTrials.gov (identifier NCT04106466). The study protocol was approved by the IRB (identifier IRB00066843).
Informed consent was obtained from participants before participation in the trial. Patients receive no monetary compensation, but instead have their DBS electrodes and Summit RC+S IPG device provided free of charge. The analysis focused on LFP signals  from a single participant with all personally identifiable information removed. 

\section{Limitations}
\label{appendix:limitations}

While our proposed method demonstrates strong performance in our experiments, there are many limitations. For instance, our approach does not have a strong mechanism for generating future unseen coefficients. Our assumed coefficient transition model is primarily motivated by our desire to obtain smooth coefficients over time. However, we believe that they may be more complex transition models that can both capture persistent activity in challenging systems while also being an accurate forecaster, such as a deep learning based transition model. Another limitation of our approach is that our method assumes smoothness in the latent space. However, we do not explore the possibility of having sparse structure in the latent space which can be easily accomplished in BPDN-DF by adding an L1 penalty over $x$.

\newpage

\newpage
\section*{NeurIPS Paper Checklist}

\begin{enumerate}

\item {\bf Claims}
    \item[] Question: Do the main claims made in the abstract and introduction accurately reflect the paper's contributions and scope?
    \item[] Answer: \answerYes{} %
    \item[] Justification: We propose a probabilistic treatment and an extended dynamics formulation in decomposed models (Section 3). We demonstrate that these changes reduce estimation errors and finds coherent structure where previous models fail in many challenging synthetic examples, and a noisy real-world example (Section 4). 
    \item[] Guidelines:
    \begin{itemize}
        \item The answer NA means that the abstract and introduction do not include the claims made in the paper.
        \item The abstract and/or introduction should clearly state the claims made, including the contributions made in the paper and important assumptions and limitations. A No or NA answer to this question will not be perceived well by the reviewers. 
        \item The claims made should match theoretical and experimental results, and reflect how much the results can be expected to generalize to other settings. 
        \item It is fine to include aspirational goals as motivation as long as it is clear that these goals are not attained by the paper. 
    \end{itemize}

\item {\bf Limitations}
    \item[] Question: Does the paper discuss the limitations of the work performed by the authors?
    \item[] Answer: \answerYes{} %
    \item[] Justification: A limitations section is provided in Appendix \ref{appendix:limitations} due to space constraints.
    \item[] Guidelines:
    \begin{itemize}
        \item The answer NA means that the paper has no limitation while the answer No means that the paper has limitations, but those are not discussed in the paper. 
        \item The authors are encouraged to create a separate "Limitations" section in their paper.
        \item The paper should point out any strong assumptions and how robust the results are to violations of these assumptions (e.g., independence assumptions, noiseless settings, model well-specification, asymptotic approximations only holding locally). The authors should reflect on how these assumptions might be violated in practice and what the implications would be.
        \item The authors should reflect on the scope of the claims made, e.g., if the approach was only tested on a few datasets or with a few runs. In general, empirical results often depend on implicit assumptions, which should be articulated.
        \item The authors should reflect on the factors that influence the performance of the approach. For example, a facial recognition algorithm may perform poorly when image resolution is low or images are taken in low lighting. Or a speech-to-text system might not be used reliably to provide closed captions for online lectures because it fails to handle technical jargon.
        \item The authors should discuss the computational efficiency of the proposed algorithms and how they scale with dataset size.
        \item If applicable, the authors should discuss possible limitations of their approach to address problems of privacy and fairness.
        \item While the authors might fear that complete honesty about limitations might be used by reviewers as grounds for rejection, a worse outcome might be that reviewers discover limitations that aren't acknowledged in the paper. The authors should use their best judgment and recognize that individual actions in favor of transparency play an important role in developing norms that preserve the integrity of the community. Reviewers will be specifically instructed to not penalize honesty concerning limitations.
    \end{itemize}

\item {\bf Theory Assumptions and Proofs}
    \item[] Question: For each theoretical result, does the paper provide the full set of assumptions and a complete (and correct) proof?
    \item[] Answer: \answerYes{} %
    \item[] Justification: We provide proofs of our lemmas in \ref{proof:offset_solution} and \ref{proof:offset_reparam}. 
    \item[] Guidelines:
    \begin{itemize}
        \item The answer NA means that the paper does not include theoretical results. 
        \item All the theorems, formulas, and proofs in the paper should be numbered and cross-referenced.
        \item All assumptions should be clearly stated or referenced in the statement of any theorems.
        \item The proofs can either appear in the main paper or the supplemental material, but if they appear in the supplemental material, the authors are encouraged to provide a short proof sketch to provide intuition. 
        \item Inversely, any informal proof provided in the core of the paper should be complemented by formal proofs provided in appendix or supplemental material.
        \item Theorems and Lemmas that the proof relies upon should be properly referenced. 
    \end{itemize}

    \item {\bf Experimental Result Reproducibility}
    \item[] Question: Does the paper fully disclose all the information needed to reproduce the main experimental results of the paper to the extent that it affects the main claims and/or conclusions of the paper (regardless of whether the code and data are provided or not)?
    \item[] Answer: \answerYes{} %
    \item[] Justification: We provide synthetic data details in Appendix \ref{appendix:synthetic_examples}, metric definitions in Appendix \ref{appendix:metrics}, and hyperparameter details in Appendix \ref{appendix:hyperparameters}. Moreover, we release our code with our submission.
    \item[] Guidelines:
    \begin{itemize}
        \item The answer NA means that the paper does not include experiments.
        \item If the paper includes experiments, a No answer to this question will not be perceived well by the reviewers: Making the paper reproducible is important, regardless of whether the code and data are provided or not.
        \item If the contribution is a dataset and/or model, the authors should describe the steps taken to make their results reproducible or verifiable. 
        \item Depending on the contribution, reproducibility can be accomplished in various ways. For example, if the contribution is a novel architecture, describing the architecture fully might suffice, or if the contribution is a specific model and empirical evaluation, it may be necessary to either make it possible for others to replicate the model with the same dataset, or provide access to the model. In general. releasing code and data is often one good way to accomplish this, but reproducibility can also be provided via detailed instructions for how to replicate the results, access to a hosted model (e.g., in the case of a large language model), releasing of a model checkpoint, or other means that are appropriate to the research performed.
        \item While NeurIPS does not require releasing code, the conference does require all submissions to provide some reasonable avenue for reproducibility, which may depend on the nature of the contribution. For example
        \begin{enumerate}
            \item If the contribution is primarily a new algorithm, the paper should make it clear how to reproduce that algorithm.
            \item If the contribution is primarily a new model architecture, the paper should describe the architecture clearly and fully.
            \item If the contribution is a new model (e.g., a large language model), then there should either be a way to access this model for reproducing the results or a way to reproduce the model (e.g., with an open-source dataset or instructions for how to construct the dataset).
            \item We recognize that reproducibility may be tricky in some cases, in which case authors are welcome to describe the particular way they provide for reproducibility. In the case of closed-source models, it may be that access to the model is limited in some way (e.g., to registered users), but it should be possible for other researchers to have some path to reproducing or verifying the results.
        \end{enumerate}
    \end{itemize}

\item {\bf Open access to data and code}
    \item[] Question: Does the paper provide open access to the data and code, with sufficient instructions to faithfully reproduce the main experimental results, as described in supplemental material?
    \item[] Answer: \answerYes{} %
    \item[] Justification: We provide functions to generate the synthetic datasets from our experiments. Unfortunately, we are unable to release the neurophysiological dataset due to request from our collaborators.
    \item[] Guidelines:
    \begin{itemize}
        \item The answer NA means that paper does not include experiments requiring code.
        \item Please see the NeurIPS code and data submission guidelines (\url{https://nips.cc/public/guides/CodeSubmissionPolicy}) for more details.
        \item While we encourage the release of code and data, we understand that this might not be possible, so “No” is an acceptable answer. Papers cannot be rejected simply for not including code, unless this is central to the contribution (e.g., for a new open-source benchmark).
        \item The instructions should contain the exact command and environment needed to run to reproduce the results. See the NeurIPS code and data submission guidelines (\url{https://nips.cc/public/guides/CodeSubmissionPolicy}) for more details.
        \item The authors should provide instructions on data access and preparation, including how to access the raw data, preprocessed data, intermediate data, and generated data, etc.
        \item The authors should provide scripts to reproduce all experimental results for the new proposed method and baselines. If only a subset of experiments are reproducible, they should state which ones are omitted from the script and why.
        \item At submission time, to preserve anonymity, the authors should release anonymized versions (if applicable).
        \item Providing as much information as possible in supplemental material (appended to the paper) is recommended, but including URLs to data and code is permitted.
    \end{itemize}

\item {\bf Experimental Setting/Details}
    \item[] Question: Does the paper specify all the training and test details (e.g., data splits, hyperparameters, how they were chosen, type of optimizer, etc.) necessary to understand the results?
    \item[] Answer: \answerYes{} %
    \item[] Justification: We specify that all datasets are split 50:50 for train and test in the main paper. Moreover, we provide optimizer and hyperparameter settings in Appendix \ref{appendix:hyperparameters}.
    \item[] Guidelines:
    \begin{itemize}
        \item The answer NA means that the paper does not include experiments.
        \item The experimental setting should be presented in the core of the paper to a level of detail that is necessary to appreciate the results and make sense of them.
        \item The full details can be provided either with the code, in appendix, or as supplemental material.
    \end{itemize}

\item {\bf Experiment Statistical Significance}
    \item[] Question: Does the paper report error bars suitably and correctly defined or other appropriate information about the statistical significance of the experiments?
    \item[] Answer: \answerYes{} %
    \item[] Justification: We provide standard deviations in Appendix \ref{sec:additional_results}.
    \item[] Guidelines:
    \begin{itemize}
        \item The answer NA means that the paper does not include experiments.
        \item The authors should answer "Yes" if the results are accompanied by error bars, confidence intervals, or statistical significance tests, at least for the experiments that support the main claims of the paper.
        \item The factors of variability that the error bars are capturing should be clearly stated (for example, train/test split, initialization, random drawing of some parameter, or overall run with given experimental conditions).
        \item The method for calculating the error bars should be explained (closed form formula, call to a library function, bootstrap, etc.)
        \item The assumptions made should be given (e.g., Normally distributed errors).
        \item It should be clear whether the error bar is the standard deviation or the standard error of the mean.
        \item It is OK to report 1-sigma error bars, but one should state it. The authors should preferably report a 2-sigma error bar than state that they have a 96\% CI, if the hypothesis of Normality of errors is not verified.
        \item For asymmetric distributions, the authors should be careful not to show in tables or figures symmetric error bars that would yield results that are out of range (e.g. negative error rates).
        \item If error bars are reported in tables or plots, The authors should explain in the text how they were calculated and reference the corresponding figures or tables in the text.
    \end{itemize}

\item {\bf Experiments Compute Resources}
    \item[] Question: For each experiment, does the paper provide sufficient information on the computer resources (type of compute workers, memory, time of execution) needed to reproduce the experiments?
    \item[] Answer: \answerYes{}{} %
    \item[] Justification: We provide details about hardware in Appendix \ref{appendix:hardware}.
    \item[] Guidelines:
    \begin{itemize}
        \item The answer NA means that the paper does not include experiments.
        \item The paper should indicate the type of compute workers CPU or GPU, internal cluster, or cloud provider, including relevant memory and storage.
        \item The paper should provide the amount of compute required for each of the individual experimental runs as well as estimate the total compute. 
        \item The paper should disclose whether the full research project required more compute than the experiments reported in the paper (e.g., preliminary or failed experiments that didn't make it into the paper). 
    \end{itemize}
    
\item {\bf Code Of Ethics}
    \item[] Question: Does the research conducted in the paper conform, in every respect, with the NeurIPS Code of Ethics \url{https://neurips.cc/public/EthicsGuidelines}?
    \item[] Answer: \answerYes{} %
    \item[] Justification: Most of our datasets are synthetically generated or derived from publicly available sources, which we refer to throughout the paper. For our real-world experiment, we include the IRB identifier.
    \item[] Guidelines:
    \begin{itemize}
        \item The answer NA means that the authors have not reviewed the NeurIPS Code of Ethics.
        \item If the authors answer No, they should explain the special circumstances that require a deviation from the Code of Ethics.
        \item The authors should make sure to preserve anonymity (e.g., if there is a special consideration due to laws or regulations in their jurisdiction).
    \end{itemize}

\item {\bf Broader Impacts}
    \item[] Question: Does the paper discuss both potential positive societal impacts and negative societal impacts of the work performed?
    \item[] Answer: \answerNA{} %
    \item[] Justification: Our proposed model aims to reveal coherent patterns in time-series data, serving as a scientific tool similar to PCA. As a probabilistic technique not designed for content generation or automated decision-making, we do not anticipate direct societal impacts.
    \item[] Guidelines:
    \begin{itemize}
        \item The answer NA means that there is no societal impact of the work performed.
        \item If the authors answer NA or No, they should explain why their work has no societal impact or why the paper does not address societal impact.
        \item Examples of negative societal impacts include potential malicious or unintended uses (e.g., disinformation, generating fake profiles, surveillance), fairness considerations (e.g., deployment of technologies that could make decisions that unfairly impact specific groups), privacy considerations, and security considerations.
        \item The conference expects that many papers will be foundational research and not tied to particular applications, let alone deployments. However, if there is a direct path to any negative applications, the authors should point it out. For example, it is legitimate to point out that an improvement in the quality of generative models could be used to generate deepfakes for disinformation. On the other hand, it is not needed to point out that a generic algorithm for optimizing neural networks could enable people to train models that generate Deepfakes faster.
        \item The authors should consider possible harms that could arise when the technology is being used as intended and functioning correctly, harms that could arise when the technology is being used as intended but gives incorrect results, and harms following from (intentional or unintentional) misuse of the technology.
        \item If there are negative societal impacts, the authors could also discuss possible mitigation strategies (e.g., gated release of models, providing defenses in addition to attacks, mechanisms for monitoring misuse, mechanisms to monitor how a system learns from feedback over time, improving the efficiency and accessibility of ML).
    \end{itemize}
    
\item {\bf Safeguards}
    \item[] Question: Does the paper describe safeguards that have been put in place for responsible release of data or models that have a high risk for misuse (e.g., pretrained language models, image generators, or scraped datasets)?
    \item[] Answer: \answerNA{} %
    \item[] Justification: Our method does not generate synthetic content that is at risk of abuse.
    \item[] Guidelines:
    \begin{itemize}
        \item The answer NA means that the paper poses no such risks.
        \item Released models that have a high risk for misuse or dual-use should be released with necessary safeguards to allow for controlled use of the model, for example by requiring that users adhere to usage guidelines or restrictions to access the model or implementing safety filters. 
        \item Datasets that have been scraped from the Internet could pose safety risks. The authors should describe how they avoided releasing unsafe images.
        \item We recognize that providing effective safeguards is challenging, and many papers do not require this, but we encourage authors to take this into account and make a best faith effort.
    \end{itemize}

\item {\bf Licenses for existing assets}
    \item[] Question: Are the creators or original owners of assets (e.g., code, data, models), used in the paper, properly credited and are the license and terms of use explicitly mentioned and properly respected?
    \item[] Answer: \answerYes{} %
    \item[] Justification: 
    \item[] Guidelines:
    \begin{itemize}
        \item The answer NA means that the paper does not use existing assets.
        \item The authors should cite the original paper that produced the code package or dataset.
        \item The authors should state which version of the asset is used and, if possible, include a URL.
        \item The name of the license (e.g., CC-BY 4.0) should be included for each asset.
        \item For scraped data from a particular source (e.g., website), the copyright and terms of service of that source should be provided.
        \item If assets are released, the license, copyright information, and terms of use in the package should be provided. For popular datasets, \url{paperswithcode.com/datasets} has curated licenses for some datasets. Their licensing guide can help determine the license of a dataset.
        \item For existing datasets that are re-packaged, both the original license and the license of the derived asset (if it has changed) should be provided.
        \item If this information is not available online, the authors are encouraged to reach out to the asset's creators.
    \end{itemize}

\item {\bf New Assets}
    \item[] Question: Are new assets introduced in the paper well documented and is the documentation provided alongside the assets?
    \item[] Answer: \answerYes{} %
    \item[] Justification: We include details on where we obtained our data (either synthetically generated, or part of an existing dataset) in the main text and appendix. We also point to the commonly package used for SLDS and rSLDS in the appendix.
    \item[] Guidelines:
    \begin{itemize}
        \item The answer NA means that the paper does not release new assets.
        \item Researchers should communicate the details of the dataset/code/model as part of their submissions via structured templates. This includes details about training, license, limitations, etc. 
        \item The paper should discuss whether and how consent was obtained from people whose asset is used.
        \item At submission time, remember to anonymize your assets (if applicable). You can either create an anonymized URL or include an anonymized zip file.
    \end{itemize}

\item {\bf Crowdsourcing and Research with Human Subjects}
    \item[] Question: For crowdsourcing experiments and research with human subjects, does the paper include the full text of instructions given to participants and screenshots, if applicable, as well as details about compensation (if any)? 
    \item[] Answer: \answerYes{} %
    \item[] Justification: We provide the IRB identifier in Appendix \ref{appendix:neuro_data} which includes the informed consent form with experiment details.
    \item[] Guidelines:
    \begin{itemize}
        \item The answer NA means that the paper does not involve crowdsourcing nor research with human subjects.
        \item Including this information in the supplemental material is fine, but if the main contribution of the paper involves human subjects, then as much detail as possible should be included in the main paper. 
        \item According to the NeurIPS Code of Ethics, workers involved in data collection, curation, or other labor should be paid at least the minimum wage in the country of the data collector. 
    \end{itemize}

\item {\bf Institutional Review Board (IRB) Approvals or Equivalent for Research with Human Subjects}
    \item[] Question: Does the paper describe potential risks incurred by study participants, whether such risks were disclosed to the subjects, and whether Institutional Review Board (IRB) approvals (or an equivalent approval/review based on the requirements of your country or institution) were obtained?
    \item[] Answer: \answerYes{} %
    \item[] Justification: We include the IRB identifier number in Appendix \ref{appendix:neuro_data}, but removed any information related to the institution conducting the experiments. 
    \item[] Guidelines:
    \begin{itemize}
        \item The answer NA means that the paper does not involve crowdsourcing nor research with human subjects.
        \item Depending on the country in which research is conducted, IRB approval (or equivalent) may be required for any human subjects research. If you obtained IRB approval, you should clearly state this in the paper. 
        \item We recognize that the procedures for this may vary significantly between institutions and locations, and we expect authors to adhere to the NeurIPS Code of Ethics and the guidelines for their institution. 
        \item For initial submissions, do not include any information that would break anonymity (if applicable), such as the institution conducting the review.
    \end{itemize}

\end{enumerate}

\clearpage

\end{document}